\newtheorem{proposition}{Proposition}
\newtheorem{definition}{Definition}
\newtheorem{lemma}{Lemma}
\newtheorem{corollary}{Corollary}
\newcommand{\mynumrob}{\ensuremath{N}\xspace}
\newcommand{\mynumrobs}{\ensuremath{N^{(s)}}\xspace}
\newcommand{\mynumrobi}[1]{\ensuremath{N^{(#1)}}\xspace}
\newcommand{\mynumspecies}{\ensuremath{N_S}\xspace}
\newcommand{\myspeciesset}{\ensuremath{\mathcal{S}}\xspace}
\newcommand{\myspeciessubset}{\ensuremath{\mathcal{\tilde{S}}}\xspace}
\newcommand{\mycrn}{\ensuremath{\mathcal{N}}\xspace}
\newcommand{\mystate}{\ensuremath{A}\xspace}
\newcommand{\mystateset}{\ensuremath{\mathcal{A}}\xspace}
\newcommand{\mynumstate}{\ensuremath{N_A}\xspace}
\newcommand{\mycomplex}{\ensuremath{C}\xspace}
\newcommand{\mycomplexset}{\ensuremath{\mathcal{C}}\xspace}
\newcommand{\mynumcomplex}{\ensuremath{N_C}\xspace}
\newcommand{\myreaction}{\ensuremath{R}\xspace}
\newcommand{\myreactionset}{\ensuremath{\mathcal{R}}\xspace}
\newcommand{\mynumreaction}{\ensuremath{N_R}\xspace}
\newcommand{\mya}[2]{\ensuremath{a_{(#1)}^{\{#2\}}}\xspace}
\newcommand{\myae}[1]{\ensuremath{a^{\{#1\}}}\xspace}
\newcommand{\myx}[2]{\ensuremath{x_{(#1)}^{\{#2\}}}\xspace}
\newcommand{\myxe}[1]{\ensuremath{x^{\{#1\}}}\xspace}
\newcommand{\myxss}[1]{\ensuremath{\bar{\mathsf{x}}_{#1}}\xspace}
\newcommand{\myxs}{\ensuremath{\mathsf{x}\xspace}}
\newcommand{\myxess}[1]{\ensuremath{\bar{\mathsf{x}}^{\{#1\}}}\xspace}
\newcommand{\mypropx}[1]{\ensuremath{r_{#1}}\xspace}
\newcommand{\myarrow}[2]{\ensuremath{\xrightleftharpoons[#2]{\,#1\,}}\xspace} 
\newcommand{\mynumobstate}{\ensuremath{N_O}\xspace}
\newcommand{\mymat}{\ensuremath{\mathbf{M}}\xspace}
\newcommand{\mydb}{\ensuremath{\mathcal{D}}\xspace}
\begin{document}

\title{A Macroscopic Model for Differential Privacy \\ in Dynamic Robotic Networks}

\author{Amanda Prorok and Vijay Kumar}
\date{}

\maketitle

\begin{abstract}
The increasing availability of online and mobile information platforms is facilitating the development of peer-to-peer collaboration strategies in large-scale networks. These technologies are being leveraged by networked robotic systems to provide applications of automated transport, resource redistribution (collaborative consumption), and location services.
Yet, external observations of the system dynamics may expose sensitive information about the participants that compose these networks (robots, resources, and humans). In particular, we are concerned with settings where an adversary gains access to a snapshot of the dynamic state of the system. We propose a method that quantifies how easy it is for the adversary to identify the specific \emph{type} of any agent (which can be a robot, resource, or human) in the network, based on this observation. We draw from the theory of \textit{differential privacy} to propose a closed-form expression for the leakage of the system when the snapshot is taken at steady-state, as well as a numerical approach to compute the leakage when the snapshot is taken at any given time. The novelty of our approach is that our privacy model builds on a macroscopic description of the system's state, which allows us to take account of protected entities (network participants) that are \emph{interdependent}. Our results show how the leakage varies, as a function of the composition and dynamic behavior of the network; they also indicate design rules for increasing privacy levels.
\end{abstract}


\section{Introduction}\label{sec:intro}
To date, the issues of privacy and security remain poorly addressed within robotics at large. These issues are  important in networked robotic systems (robot teams), where individual robots coordinate and plan their actions by communicating explicitly (e.g., through radio communication) or implicitly (e.g., through observations) with their network neighbors (in peer-to-peer mesh networks), and well as with with human operators or base-stations (in asymmetric broadcast architectures)~\citep{Michael:2011jz,Berman:2009vu}. When network communication channels are secured and data is encrypted, analysis of the network traffic flow may still reveal sensitive information and lead to privacy breaches~\citep{Shi:2004bm,Zhang:2011ia}. 
For example, the work in~\citep{Zhang:2011ia} shows that by observing a brief snapshot of network traffic, the activities of the network users were accurately inferred.
Privacy breaches may also incur if an adversary acquires aggregated data sets (e.g., by gaining access to the base-station). In particular, it has been shown that even if data is anonymized, adversaries can use independent anonymized data (i.e., side information) to breach privacy in so-called composition attacks~\citep{Ganta:2008df}.
Other work considers data that describes physically observable phenomena. For example, the work in~\citep{Ma:2013ig} investigates the security of mobility platforms that release anonymous mobility traces. The authors show that an adversary can infer the true identity of a user in a set of anonymous traces, by making use of only a small amount of side information. The work in~\citep{Chan:2008eh} considers a privacy preserving approach to the problem of monitoring crowds and estimating crowd sizes without making use of people models or tracking methods. Nevertheless, there is little work that describes how the privacy of physical systems is affected by the behaviors of the dynamic agents that compose them.

The overarching goal of our work is to ensure the anonymity of agents that compose a heterogeneous collaboration network. Working towards this goal, we present a method that allows us to quantify the loss of privacy that incurs when third parties can take a snapshot of the state of the dynamical system. The novelty of our work is that we quantify the loss of privacy of a dynamical system that is composed of \emph{connected and interdependent} robotic agents. This is not unlike the traditional setting of privacy, which considers computer databases of independent entries, with the difference that here, the individual entries (our robotic agents) are dependent.

Our framework makes use of a mean-field approach that models ensemble averages. The resulting macroscopic equation describes the system-wide dynamics, and allows us to compute the probability of any possible discrete system state. Subsequently, we use this measure to formulate a privacy metric that quantifies the maximum deviation of any two observable system state distributions.

\begin{definition}[\underline{Private Robot Network}]\label{def:private_system}
	\textit{ A private robot network is a team of heterogeneous robots where it is not possible to discern the type of an individual robot nor its specific interactions with other robots or human users.}
\end{definition}

\subsection{Related Work}
Various measures of privacy have been proposed in the database literature so far.
The early work in~\citep{Agrawal:2000gu} proposes a quantification of privacy in terms of the amount of noise added to a true value, or, in other words, how closely the original value of a modified attribute can be estimated. This measure, however, omits the notion of side-information, i.e., any additional information about the underlying distribution that the adversary might own. The work in~\citep{Evfimievski:2003jp} extends the notion of privacy to include such prior knowledge. The proposed measure suggests a quantification of the largest difference between an adversary's a-priori to a-posteriori beliefs (which corresponds to the worst-case scenario). It turns out that this model is significantly stronger, since it accounts for infrequent, but noticeable privacy breaches. 

In 2006, Dwork et al. introduced the notion of $\epsilon$-indistinguishability, a generalization of the measure in~\citep{Evfimievski:2003jp}, and later coined the term of \emph{differential privacy}~\citep{Dwork:2011tn}. Today, differentially private mechanisms are enjoying tremendous success, due to their ability of dealing with arbitrary side information (a \emph{future-proof} quality) and worst-case scenarios~\citep{Dwork:2008hs}. As a result, differentially private mechanisms are being developed and applied to various domains, including information networks~\citep{Koufogiannis:2016dc}, distributed convex optimization~\citep{Hale:2015ia}, Kalman filtering~\citep{LeNy:2014ba}, consensus algorithms~\citep{Katewa:2015jl}, smart grids~\citep{Koufogiannis:2014gb}, and traffic flow estimation~\citep{LeNy:2014kw}. Although these approaches present algorithms (mechanisms) that guarantee privacy levels, the common underlying assumption is that protected individuals or signals are all \emph{independent}. 

In particular, our work distinguishes itself from former approaches in that we present a framework that incorporates the dynamics of an interdependent networked system, and casts the probability distribution over all possible system states into a differential privacy formalism. In other words, we focus on developing a privacy metric for such dynamical systems; we do not, however, design algorithms to control or optimize their dynamics such that privacy constraints are met.

\subsection{Contributions}
The current paper presents a technique that allows us to analyze the privacy of networked systems composed of interdependent agents by modeling the loss of privacy when an external observer is able to gather information on the dynamic state of the system. We demonstrate the utility of our technique on hand of several case-studies. Specifically, we make the following contributions:

{\textit{1) \underline{Model of Networked Robotic System}:}} We begin by formulating a framework that allows us to capture dynamic collaborations of interdependent robots and resources.
We show how our model facilitates the design of collaborations and dependencies through \textit{compound states}, i.e., states that depend on multiple robots, humans or resources. This model also facilitates the definition of \textit{observable system-level} information, i.e., information that can be observed publicly. 

{\textit{2) \underline{Privacy Model}:}} The definition of privacy (or anonymity) is a difficult task, and a significant amount of research in the database literature is dedicated to this subject. A recent successful definition is that of \textit{differential privacy}~\citep{Dwork:2011tn}, which provides strong anonymity guarantees in the presence of arbitrary side information. One of our main contributions in this work is the development of an equivalent notion of privacy that can be applied to dynamic networks with \emph{interdependent} robots, humans and resources, where the goal is to protect the \emph{type} of individual agents in the network. Our measure quantifies the loss of privacy that incurs when system-wide observations are made. 

{\textit{3) \underline{Methods of Analysis}:}} Finally, we present a technique that employs the robotic network model and privacy model jointly to produce a quantitative analysis of privacy. The method uses a macroscopic description of the system dynamics, which is plugged into a formula of differential privacy. We show that in specific cases, the formula is closed-form, and can be computed efficiently. For the general case, we show how computational tools can be applied to evaluate the formula.

\section{Model of Networked Robotic System}

We define a networked system composed of robotic agents, where each agent belongs to a \textit{type}. The system is composed of \mynumspecies types $\myspeciesset= \{1,\ldots,\mynumspecies\}$, with a total number of \mynumrob robots, and \mynumrobs robots per type $s$ such that $\sum_{s \in \myspeciesset} \mynumrobs = \mynumrob$.
Robots interact (connect, collaborate, or create coalitions) a specific rate, which we assume to be known or to be determinable through system identification methods. 
A robot of type $s$ that is elementary (non-interactive) occupies a state denoted by $a^{\{s\}}_{(.)}$.
Robots that jointly interact share a state denoted $a_{(\cdot)}^{\mathcal{I}}$. The superscript $\mathcal{I}$ is the set of all robot types that are involved in this state, and the optional subscript denotes the specific state activity, if available (e.g., \emph{transporting}, \emph{sensing}, etc.). For example, $a_{(\mathrm{transport})}^{{\{1,2\}}}$ is a state where robots of type 1 and 2 collaborate to jointly transport goods.
Note that $\mathcal{I}$ may also be an empty set, which indicates that the state is unrelated to any particular robot type (such states could be byproducts that relate to performance metrics or environmental conditions, or could also be shared resources --- e.g., a battery pack, road lanes, or even network bandwidth).

The performance of networked robotic systems depends on collaborative mechanisms that require either physical or virtual interactions. 
Our aim is to capture these interactions, and hence, we choose a modeling framework that explicitly accounts for them. We build our formalism on the theory of Chemical Reaction Networks (CRN)~\citep{Feinberg:1991jd}, as it presents an efficient way of defining collaboration mechanisms with dependencies; simultaneously, CRNs provide tools to capture system-wide dynamics, enabling tractable analyses. Indeed, they are a powerful means of representing complex systems --- though not a new field of research, recent findings that quicken the computations are accelerating the adoption of CRNs into domains other than biology and chemistry~\citep{Munsky:2006es}. 

\subsection{Chemical Reaction Network}
We define our CRN as a triplet $\mycrn = (\mystateset, \mycomplexset, \myreactionset)$, where \mystateset is the set of states, \mycomplexset is the set of complexes, and \myreactionset is the set of reactions. 

{\underline{\textit{State set $\mathcal{A}$:}}} The state set encompasses all states that arise in the system, with $\mystateset = \{\mystate_1, \ldots, \mystate_{\mynumstate}\}$ where \mynumstate is the number of states. States relating to a specific robot type $s$ are denoted by $\mystateset^{(s)}$. The set of all states is denoted
\begin{equation}
		\mystateset = \mathop{\cup}_{s=1}^{\mynumspecies} \mystateset^{(s)}~~\mathrm{and}~~\mystateset^{(s)} =  \mathop{\cup}_{s \in \mathcal{I}} a^{\mathcal{I}}
\end{equation}
We can identify the compound states of an arbitrary subset of robots $\myspeciessubset \subset \myspeciesset$ by considering the intersection of sets $\mathop{\cap}_{i \in \mathcal \myspeciessubset} \mystateset^{(i)}$. Trivially, if $\mathop{\cap}_{i \in \mathcal \myspeciessubset} \mystateset^{(i)} = \emptyset$, then the robots in $\myspeciessubset$ do not collaborate.
The CRN is a population model, and allows us to keep track of the number of robots of any type in each of the states in \mystateset. Hence, we define a population vector $\mathbf{x} = [x_1, \ldots, x_{\mynumstate}] \in \mathbb{N}_{\geq 0}^{\mynumstate}$, where $x_i$ corresponds to the population present in state $\mystate_i$. We refer to the population vector $\mathbf{x}$ as the system-level state. In order to simplify the formulation of our case studies later on, we will also use the notation ${x}^{\mathcal{I}}$ to refer explicitly to the population in state $a_{}^{\mathcal{I}}$.

\underline{\textit{Complex set $\mathcal{C}$:}} 
The complex set is defined as $\mycomplexset=\{\mycomplex_1,\ldots, \mycomplex_{\mynumcomplex}\}$, with \mynumcomplex the number of complexes, and where $\mycomplex_j = \sum_{i=1}^{\mynumstate} \rho_{ij} \mystate_i$ for $j = 1, \ldots, \mynumcomplex$, with vector $\boldsymbol{\rho_{j}} = [\rho_{1j}, \ldots, \rho_{\mynumstate j}]^{\top} \in \mathbb{N}_{\geq 0}^{\mynumstate}$.
A complex is a linear combination of states, and denotes the net input or output of a reaction. 
In other words, a complex denotes either \emph{(i)} the states that are required for a certain reaction to take place, or \emph{(ii)} the states that occur as an outcome of a certain reaction that took place. The non-negative integer terms $\rho_{ij}$ are coefficients that represent the multiplicity of the states in the complexes.

\underline{\textit{Reaction set $\mathcal{R}$:}}
We use complexes to formulate reactions
$\myreaction_l : \mycomplex_j \xrightarrow{\mypropx{l}} \mycomplex_k.$
The reaction set is defined as $\myreactionset = \{\myreaction_1, \ldots, \myreaction_{\mynumreaction}\}$, with \mynumreaction the number of reactions, such that $\myreaction_l \in \{(\mycomplex_j, \mycomplex_k) | \exists \, \mycomplex_j, \mycomplex_k$ with $\mycomplex_j \rightarrow \mycomplex_k\}$ for $j, k = 1,\ldots,\mynumcomplex$, and where $r_l$ is the rate function $r_l(\mathbf{x}; \kappa_l): \mathbb{N}_{\geq 0}^{\mynumstate} \mapsto \mathbb{R}_{\geq 0}$ parameterized by rate constant $\kappa_l$.
In this work, we use non-linear mass-action rate functions, and $r_l(\mathbf{x};\kappa_l) =  \kappa_l \prod_{i=1}^{\mynumstate} x_i^{\rho_{ij}}$ for all $\myreaction_l = (\mycomplex_j,\cdot)$.
A set of complexes that is connected by reactions is termed a linkage class.
%
The net loss and gain of each reaction is summarized in a $\mynumstate \times \mynumreaction$ stoichiometry matrix $\Gamma$, the columns of which encode the change of population per reaction.
In particular, the $i$-th column of $\Gamma$ corresponds to the $i$-th reaction $\myreaction_i = (\mycomplex_j, \mycomplex_k)$ and thus, the column is equal to $\boldsymbol{\rho_{k}} - \boldsymbol{\rho_{j}}$.
The elements $\Gamma_{ji}$ are the so-called stoichiometric coefficients of the $j$-th state in the $i$-th reaction. Positive and negative coefficients denote products and reactants of the reaction, respectively.

\subsection{System Dynamics}\label{sec:dynamics}

We take a mean-field approach to model the system deterministically, and represent robot ensemble averages with \myxs{} (unlike $\mathbf{x}$ that denotes the discrete state).
The average population value in the respective system states changes according to an ordinary differential equation, described as follows
\begin{eqnarray}\label{eq:dynamics}
	\dot{\myxs} = \mymat \mathbf{A}\psi(\myxs),
\end{eqnarray}
where $\psi(\myxs)$ returns a vector in $\mathbb{R}^{\mynumcomplex}$ in which each entry $\psi_j$ is the product of states in complex $j$ (i.e., $\psi_j = \prod_{i=1}^{\mynumstate} x_i^{\rho_{ij}}$), where $\mymat \in \mathbb{R}^{\mynumstate \times \mynumcomplex}$ is a matrix in which each entry $\mymat_{ij}$ is the coefficient of state $j$ in complex $i$, and where matrix $\mathbf{A} \in \mathbb{R}^{\mynumcomplex \times \mynumcomplex}$ is defined as
\[ 
\mathbf{A}_{ij} = \left \{
  \begin{tabular}{ll}
  $\kappa_{ji}$, &{if $i \neq j, (C_i,C_j) \in \myreactionset$}\\
  0, &{if $i \neq j, (C_i,C_j) \notin \myreactionset$} \\
  $\mathop{-\sum}\limits_{(\mycomplex_i, \mycomplex_k)\in\myreactionset} \kappa_{ki}$, &if $i=j$
  \end{tabular}
\right.
\]
This general form captures non-linear dynamics. We note that when agents do not interact (and there are no dependencies), the  system exhibits linear dynamics. In this case, each individual state is also a complex, with $\mynumstate=\mynumcomplex$. The matrix $\mathbf{M}$ is the identity matrix, and the function $\psi(\myxs)=\myxs$. 

\subsection{Continuous-Time Markov Chain}\label{sec:markov_chain}
We can also model the system stochastically, and keep track of the exact number of robots in system states. A simple stochastic model for CRNs treats the system as a continuous time Markov chain with state $\mathbf{x} \in \mathbb{N}_{\geq 0}^{\mynumstate}$ (i.e., the population vector), and with each reaction modeled as a possible transition for the state. 
This model assumes that the time between transitions is exponentially distributed, and hence, the number of transitions between two neighboring states is Poisson distributed. In order to calibrate rate constants $\kappa_l$ on hand of a real system, we can proceed by measuring the effective transition rates (by observing the number of transitions, assuming the number of robots is known), and using the mass-action rate functions to solve for the parameter values.
The Chemical Master Equation (CME)~\citep{LopezCaamal:2014kj} describes the temporal evolution of the probability mass function over all possible population vectors, and is given by a set of ordinary differential equations associated to the continuous-time, discrete-state Markov Chain. 
The CME is given by the linear ordinary differential equation
\begin{eqnarray}\label{eq:cme}
\dot{\boldsymbol{\pi}}(t) = K \boldsymbol{\pi}(t)
\end{eqnarray}
with $\boldsymbol{\pi} = [\pi_{\mathbf{x}_i} | \mathbf{x}_i \in \mathcal{X}_{\mycrn}] $ and where $\mathcal{X}_{\mycrn}$ is the set of all possible population vectors $\mathbf{x}$ that can arise from the CRN \mycrn.
The entries of the transition rate matrix $\mathbf{K} \in \mathbb{R}^{|\mathcal{X}_{\mycrn}| \times |\mathcal{X}_{\mycrn}|}$ are given by
\begin{equation}\label{eq:transition_rates}
\mathbf{K}_{ij} = \left \{
  \begin{tabular}{ll}
  $-\sum_{m=1}^{\mynumreaction} r_m(\mathbf{x}_i;\kappa_m)$, &{if $i =j$}\\
  $r_m(\mathbf{x}_i;\kappa_m)$, &{$\forall j: \mathbf{x}_j = \mathbf{x}_i + \boldsymbol{\rho}_l - \boldsymbol{\rho}_k$} \\
  & $\mathrm{with}\, R_m = (\mycomplex_k, \mycomplex_l)$ \\
  $0$, &otherwise
  \end{tabular}
\right.
\end{equation}
When the number of possible system-wide states $|\mathcal{X}_{\mycrn}|$ is small, it is possible to obtain a closed-form solution to Eq.~\eqref{eq:cme}. However, when $|\mathcal{X}_{\mycrn}|$ is large or even infinite, it may become computationally intractable to solve the system. In such cases, we can resort to Finite State Projection (FSP) methods~\citep{Munsky:2006es} that approximate the solution by compressing the number of possible states (and, hence, also the size of $\mathbf{K}$). 
The idea of FSP is to expand the number of states dynamically, according their probabilities. States with low probabilities are pruned, and, hence, only statistically relevant states are added to the domain of the solver. 

\subsection{Observable System-Level State}
Finally, we introduce two auxiliary functions that help us describe the system behavior: a function $f_{\mycrn}$ that describes the system dynamics, and a query function $q$:
\begin{eqnarray}
f_{\mycrn}(\mathbf{x_0}, t): && \mathbb{N}_{\geq 0}^{\mynumstate} \times \mathbb{R}_{\geq 0}  \mapsto  \mathbb{N}_{\geq 0}^{\mynumstate} \nonumber \\
q(\mathbf{x}): && \mathbb{N}_{\geq 0}^{\mynumstate}  \mapsto  \mathbb{N}^{\mynumobstate},\, \mynumobstate \in \mathbb{N}_{> 0} 
\end{eqnarray}
The execution function $f_{\mycrn}$ samples a trajectory (up to time $t$) of system-level states, as given by the continuous-time Markov chain, and returns a population vector $\mathbf{x}(t)$.

The query function $q$ allows us to formalize the notion of an \textit{observable system-level} state. It takes the population vector $\mathbf{x}$ as input, and returns a vector of \emph{observable} values $\mathbf{y}$. 
In its most basic form, the query function is the identity function, meaning that an observer is able to capture the exact (true) system-level state, and $\mathbf{x} = \mathbf{y}$. In this work, we assume that the observed values take the form of simple summations over the population vector. In particular, robots in their elementary states $a_{(\cdot)}^{\{s\}}$ are not identifiable, since this would lead to a direct breach of privacy.
This assumption is well motivated when the types of individual robots are not distinguishable from an outside vantage point, and thus, only aggregated values can be observed. The components of $\mathbf{y}$ are given by
\begin{equation}\label{eq:obs_state}
y_i = \sum_{j \in \Omega_i} x_j
\end{equation}
with $\Omega_i \subset \{1,\ldots,\mynumstate\}$ and all $\Omega_i$ disjoint.

\section{A Motivational Example}
\begin{SCfigure}[][b]
\psfrag{A}[cc][][0.8]{$a^{\{A\}}$}
\psfrag{B}[cc][][0.8]{$a^{\{B\}}$}
\psfrag{R}[cc][][0.8]{$a^{\{R\}}$}
\psfrag{C}[cc][][0.8]{$a^{\{A,R\}}$}
\psfrag{D}[cc][][0.8]{$a^{\{B,R\}}$}
\psfrag{1}[cc][][0.8]{$r_1, r_2$}
\psfrag{2}[cc][][0.8]{$r_3, r_4$}
{\includegraphics[width=0.3\columnwidth]{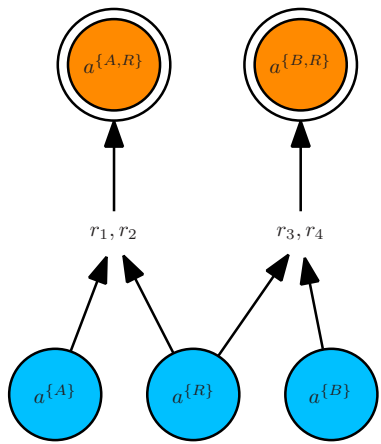}}
\caption{Reaction topology for the example in Eq.~\eqref{eq:example_crn}. 
Doubly outlined circles represent compound states.
\label{fig:example_graph}} 
\end{SCfigure}
Let us consider a simple example. Our robot network consists of two robot types $\{A, B\}$ that access a resource $R$, which is available to all robots in the network. The goal of this example is to illustrate that we can isolate the type of a particular robot, just by observing the dynamic state of the system.

The system is defined as follows. The robots of type $A$ and $B$ utilize the resource $R$ at rates \mypropx{1} and \mypropx{3}, respectively, and they return the resource at rates \mypropx{2} and \mypropx{4}, respectively. The CRN of this system is formulated as
\begin{eqnarray}\label{eq:example_crn}
	a^{\{A\}} + a^{\{R\}} \myarrow{\mypropx{1}}{\mypropx{2}} a^{\{A,R\}} \nonumber\\
	a^{\{B\}} + a^{\{R\}}\myarrow{\mypropx{3}}{\mypropx{4}} a^{\{B,R\}},
\end{eqnarray}
where $a^{\{A\}}$ and $a^{\{B\}}$ denote the state of a robot of type $A$ or $B$ before the resource is allocated, and $a^{\{A,R\}}$ and $a^{\{B,R\}}$ denote the state of the robots after obtaining the resource. 
The internal system state representation is $\mathbf{x} = [x^{\{A\}},\, x^{\{B\}},\, x^{\{R\}},\, x^{\{A,R\}}, \,x^{\{B,R\}}]^\top$. Fig.~\ref{fig:example_graph} illustrates the topology of this CRN.
In our example, the system composed of three robots, with two alternative instantiations, defined by \emph{adjacent} databases \mydb and \mydb'. These two databases differ by a single row, i.e., the type of robot with ID 2. Figure~\ref{fig:example_DB} illustrates the system.
 We assume that two instances of the resource $R$ are available to the robots. Initially, no resources are allocated, hence, for database \mydb, we have $\mathbf{x_0} = [2,\, 1,\, 2,\, 0, \,0]^\top$, and for database \mydb', we have $\mathbf{x_0} = [1,\, 2,\, 2,\, 0, \,0]^\top$. 
We recall that system transition rates are defined as mass-action functions, with the transition rate matrix given by Eq.~\eqref{eq:transition_rates}. The continuous time Markov chain corresponding to the two databases is shown in Fig.~\ref{fig:example_CTMC}. 
\begin{SCfigure}
\psfrag{z}[lc][][0.7]{Robot ID}
\psfrag{e}[cc][][0.7]{Database \mydb}
\psfrag{f}[cc][][0.7]{Database \mydb'}
\psfrag{o}[cc][][0.7]{Type}
\psfrag{5}[cc][][0.7]{1}
\psfrag{6}[cc][][0.7]{2}
\psfrag{7}[cc][][0.7]{3}
\psfrag{k}[lc][][0.7]{$A$}
\psfrag{m}[lc][][0.7]{$B$}
{\includegraphics[width=0.6\columnwidth]{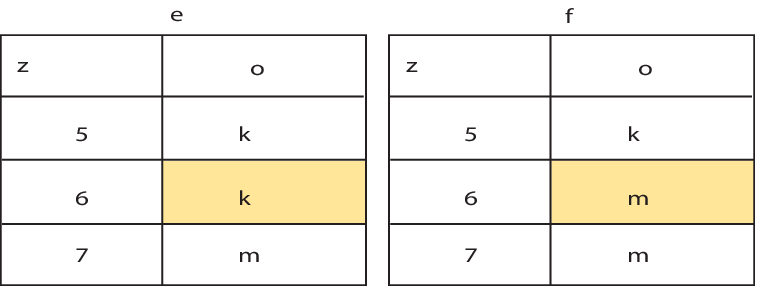}}
\caption{We consider two alternative systems, recorded in databases \mydb and \mydb'. Robot no. 2 once belongs to type $A$, and once to type $B$.
\label{fig:example_DB}} 
\end{SCfigure}
We define the observable system-level state as
\begin{equation}
	\mathbf{y} = \begin{bmatrix} x^{\{A\}} + x^{\{B\}} \\ x^{\{A,R\}} + x^{\{B,R\}} \end{bmatrix},
\end{equation}
which counts the number of robots using and not-using the shared resource.
We note that state $x^{\{R\}}$ is factored out, since we are only interested in protecting the types of the robots.

The observable distribution of the system at steady-state~\footnote{By Theorem 4.1 of~\citep{Siegel:2000ui}, we can show that this system has a unique, globally asymptotically stable positive equilibrium.} for both database variants \mydb and \mydb' is shown in Fig.~\ref{fig:example_PMF}. Although the composition differs only in one entry, the distributions differ significantly. Now, let us imagine that an adversary makes an observation of the system: 2 robots are currently not using the resource, and 1 robot currently is; hence, he concludes that the state is $\mathbf{x} = [2,\,1]^\top$. The adversary may own side-information, for example, he may know the underlying system dynamics, and he can compute the expected distribution of the system at steady-state. 
Building on this, he can infer that the more likely database is \mydb'. In the worst case --- i.e., the adversary knows the types of both robots \emph{not} using the resource --- he can then directly infer the type of the robot using the resource. For  example, if the two robots not using the resource are type $A$ and $B$, then the probability that the robot using the resource  belongs to type $A$ is 0.43, and the probability that it belongs to type $B$ is 0.57.

On hand of this basic example, we demonstrated that the observable state of a system is affected by its instantiation (database), and that it is easy to distinguish a system \mydb from \mydb' (and hence isolate any robot) if the likelihood that an observation belongs to either \mydb or \mydb' is very different. 
Thus, if we want to conceal the types of robots in the system, we need to ensure that the deviation in observable distributions is kept as small as possible for all adjacent databases. We address this problem by first developing a model that quantifies the \emph{loss of privacy}, as a function of the structure of the collaboration mechanism, the collaboration rates, and the system's composition.

\begin{figure}
\vspace{0.5cm}
\centering
\psfrag{1}[cc][][0.6]{$\begin{bmatrix} 2 \\ 1 \\ 2 \\ 0 \\ 0 \end{bmatrix}$}
\psfrag{2}[cc][][0.6]{$\begin{bmatrix} 1 \\ 1 \\ 1 \\ 1 \\ 0 \end{bmatrix}$}
\psfrag{3}[cc][][0.6]{$\begin{bmatrix} 0 \\ 1 \\ 0 \\ 2 \\ 0 \end{bmatrix}$}
\psfrag{4}[cc][][0.6]{$\begin{bmatrix} 2 \\ 0 \\ 1 \\ 0 \\ 1 \end{bmatrix}$}
\psfrag{5}[cc][][0.6]{$\begin{bmatrix} 1 \\ 0 \\ 0 \\ 1 \\ 1 \end{bmatrix}$}
\psfrag{0}[cc][][0.6]{$\begin{bmatrix} 1 \\ 2 \\ 2 \\ 0 \\ 0 \end{bmatrix}$}
\psfrag{6}[cc][][0.6]{$\begin{bmatrix} 1 \\ 1 \\ 1 \\ 0 \\ 1 \end{bmatrix}$}
\psfrag{7}[cc][][0.6]{$\begin{bmatrix} 1 \\ 0 \\ 0 \\ 0 \\ 2 \end{bmatrix}$}
\psfrag{8}[cc][][0.6]{$\begin{bmatrix} 0 \\ 2 \\ 1 \\ 1 \\ 0 \end{bmatrix}$}
\psfrag{9}[cc][][0.6]{$\begin{bmatrix} 0 \\ 1 \\ 0 \\ 1 \\ 1 \end{bmatrix}$}
\psfrag{a}[lc][][0.7]{$4\kappa_1$}
\psfrag{b}[lc][][0.7]{$\kappa_2$}
\psfrag{c}[lc][][0.7]{$\kappa_1$}
\psfrag{d}[lc][][0.7]{$2\kappa_2$}
\psfrag{o}[lc][][0.7]{$2\kappa_3$} \psfrag{e}[lc][][0.7]{$\kappa_4$}
\psfrag{n}[lc][][0.7]{$\kappa_3$} \psfrag{m}[lc][][0.7]{$\kappa_4$}
\psfrag{f}[lc][][0.7]{$2\kappa_1$} \psfrag{g}[lc][][0.7]{$\kappa_4$}
\psfrag{i}[lc][][0.7]{$4\kappa_3$}
\psfrag{x}[lc][][0.7]{$2\kappa_4$}
\subfigure[]{\includegraphics[width=0.4\columnwidth]{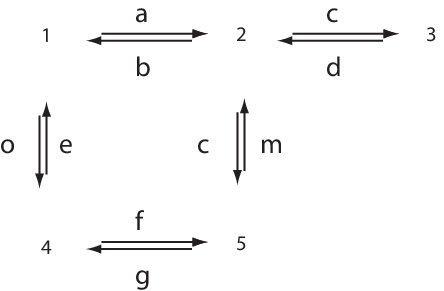}}\hspace{1.9cm}
\subfigure[]{\includegraphics[width=0.4\columnwidth]{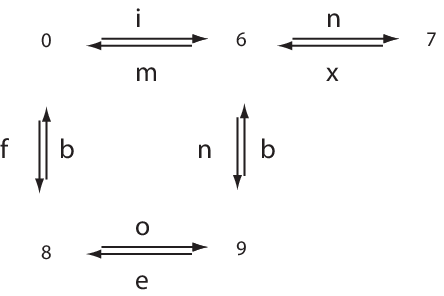}}
\caption{Continuous time Markov chains as defined by Eq.~\eqref{eq:example_crn}, for two variant systems, (a) \mydb and (b) \mydb'. 
\label{fig:example_CTMC}} 
\vspace{0.2cm}
\end{figure}

\begin{SCfigure}
\psfrag{a}[tc][][0.7]{$\mathbf{y}=\begin{bmatrix} 1 \\ 2 \end{bmatrix}$}
\psfrag{n}[tc][][0.7]{$\mathbf{y}=\begin{bmatrix} 2 \\ 1 \end{bmatrix}$}
\psfrag{c}[tc][][0.7]{$\mathbf{y}=\begin{bmatrix} 3 \\ 0 \end{bmatrix}$}
\psfrag{o}[lc][][0.65]{$\lim\limits_{\tau \rightarrow \infty} \mathbb{P}[q \circ f_{\mycrn}(\mathbf{x}_0(\mydb),\tau)]$}
\psfrag{e}[lc][][0.65]{$\lim\limits_{\tau \rightarrow \infty} \mathbb{P}[q \circ f_{\mycrn}(\mathbf{x}_0(\mydb'),\tau)]$}
\psfrag{p}[cc][][0.7][90]{$\mathbb{P}$}
\centering
{\includegraphics[width=0.55\columnwidth]{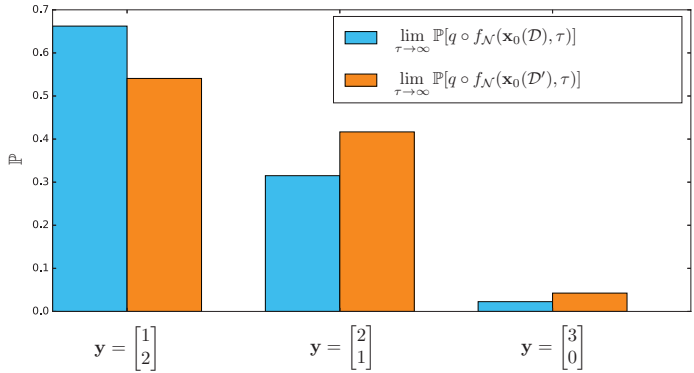}}
\caption{Observable steady-state distributions, for databases \mydb and \mydb' as defined in Fig.~\ref{fig:example_DB}. The rate constants are $\kappa_1 = 3$, and $\kappa_2 = \kappa_3 = \kappa_4 = 1$.  
\label{fig:example_PMF}} 
\vspace{0.5cm}
\end{SCfigure}

\section{Differentially Private Robot Networks}
In this section, we develop our analogy to a formal definition of privacy that stems from the database literature, and that is referred to as differential privacy (formerly known as indistinguishability)~\citep{Dwork:2011tn}.
This concept considers two key components: a \textit{database} that holds sensitive information pertaining to individuals, and a \textit{query} that releases information obtained from the database via a mechanism.
The goal of differential privacy is to develop mechanisms that are able to provide information in response to database queries, while preserving the privacy of the individuals recorded therein, even in the presence of arbitrary side information\footnote{In our context of a networked robotic system, an example of side information could be the number of manufacturing parts ordered to build the system. If different robot species are made of different parts, such information can be used to construct an initial guess about the number of robots per species. Thus, one would be able to derive the probability of a robot belonging to a given species.}. 
Side information can be understood as a prior probability distribution over the database, and hence, privacy is preserved if no additional information about this distribution is obtained through the query.
It is important to note that the condition of differential privacy is made with respect to the release mechanism (i.e., query), and does not depend on the database itself, nor on the side information. In particular, if an individual's presence or absence in the database does not alter the distribution of the output of the query by a significant amount, regardless of the side information, then the privacy of that individual's information is assured.

Our analogy applies the concepts of database and query to the context of networked robotic systems. We consider a database that represents the composition of our robotic system, and that records the types of each of the robots. Also, we consider an external observer who is capable of observing the robotic system during its operation, and who can query the system by retrieving information about the system-level state (i.e., observable system-level state). Then, our analogous definition of privacy is the notion that the observer cannot obtain private information about individual robots by querying the system (i.e., the type of a robot remains private, cf. Def.~\ref{def:private_system}).
The composition of our robotic network is recorded in a database $\mydb \in \myspeciesset^{\mynumrob}$ that consists of \mynumrob entries, where each entry $\mydb_i$ denotes the type of robot $i$.
We define an \textit{adjacency} set $\mathrm{Adj}(\mydb)$ that encompasses all databases \mydb' adjacent to \mydb. 
Two databases \mydb and \mydb' are adjacent if they differ by one single entry. In other words, two robotic networks (represented by \mydb and \mydb') are adjacent if they differ by one robot $i$, meaning that robot $i$ belongs to $s_i$ in \mydb (i.e., $\mydb_i = s_i$), and to a different type $s'_{i} \neq s_i$ in \mydb' (i.e., $\mydb_i \neq s_i$). 
As previously described, the behavior of the robotic network can be described by tracking the states that compose the CTMC. If we let the system run, it produces a trajectory that can be evaluated at a given time $\tau$, resulting in a snapshot of the population vector $\mathbf{x}$.
Our query/response model consists of an external user (adversary) who is able to observe the system-level state at time $\tau$ --- we refer to this model as a snapshot adversary. 
\begin{definition}[\underline{Snapshot Adversary}]\label{def:adversary}
A snapshot adversary gains system-level information at a specific time $\tau$. This system-level information corresponds to the \emph{observable} state $\mathbf{y} = q \circ f_{\mycrn}(\mathbf{x_0}(\mydb), \tau)$. A special case of snapshot adversary is an adversary who queries the system once, at any time during its nominal (steady-state) regime. This special case is referred to as the \underline{steady-state snapshot adversary}.
\end{definition}
The query $q \circ f_{\mycrn}(\mathbf{x_0}(\mydb), \tau)$ depends on the system's composition \mydb, and the time at which the system is observed $\tau$. 
The function $\mathbf{x_0}(\mydb): \myspeciesset^{\mynumrob} \mapsto \mathbb{N}_{\geq 0}^{\mynumstate}$ distributes the robots in \mydb to their initial states.
A schema of this system is shown in Fig.~\ref{fig:schema}. Our aim is to analyze the differential privacy of the observed system output. To this end, we propose a definition of differential privacy that is applicable to dynamic networked robotic systems.
\begin{figure}[tb]
\psfrag{d}[cc][][0.7]{$\mydb$}
\psfrag{a}[cc][][0.7]{$f_{\mycrn}(\mathbf{x_0}(\mydb),\tau)$}
\psfrag{q}[cc][][0.7]{$q \circ f_{\mycrn}(\mathbf{x_0}(\mydb),\tau)$}
\psfrag{t}[cc][][0.7]{$\tau$}
\psfrag{x}[cc][][0.7]{$\mathbf{x}$}
\psfrag{y}[cc][][0.7]{$\mathbf{y}$}
\centering
{\includegraphics[width=0.75\columnwidth]{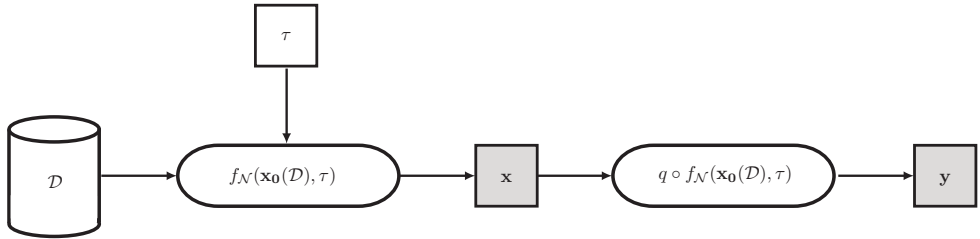}}
\caption{The composition of the robotic network is recorded in a database \mydb. The function $f_{\mycrn}$ is the stochastic process governing the dynamics of the robotic network, and returns a system-level state output $\mathbf{x}$. Query function $q$ reads the (internal) system-level state, and returns the observable output $\mathbf{y}$. Parameter $\tau$ denotes the time at which the system is observed.
\label{fig:schema}} 
\end{figure}

\begin{definition}[\underline{$\epsilon$-indistinguishable robot network}]
\textit{A networked robot team, which, according to database \mydb, is composed of heterogeneous robot types that interact with dynamics defined by a system \mycrn, is $\epsilon$-indistinguishable (and gives $\epsilon$-differential privacy) if for all possible team compositions recorded in databases \mydb', we have}
\begin{equation} \label{eq:diff_priv}
\mathcal{L}(\mydb) = \mathop{\mathrm{max}}\limits_{\mydb' \in \mathrm{Adj}(\mydb)} \left | \mathrm{ln} \frac{\mathbb{P}[q \circ f_{\mycrn}(\mathbf{x_0}(\mydb),\tau)]}{\mathbb{P}[q \circ f_{\mycrn}(\mathbf{x_0}(\mydb'),\tau)]} \right | \leq \epsilon.
\end{equation}
\textit{where $\mathbb{P}[\mathbf{y}]$ denotes the probability of the output $\mathbf{y}$, obtained through query $q$ of the system-level state given by $f_{\mycrn}$.}
\end{definition}
The value $\epsilon$ is referred to as the leakage. Intuitively, this definition states that if two robotic systems are similar, in order to preserve privacy they should correspond to similar distributions over their observable outputs. As noted by Dwork et al.~\citep{Dwork:2011tn}, the definition of differential privacy is stringent: for a pair of distributions whose statistical difference is arbitrarily small, the ratio may be large, even infinite in the case when a point in one distribution assigns probability zero and the other non-zero. 
Later, in our evaluations, we use a smooth version of the leakage formula above, by adding an arbitrary, negligibly small value $\nu$, uniformly over the support of the probability distributions. This allows us to differentiate between large and small mismatches of the output when one point of a probability distribution returns zero. Due to this addition, we are able to show continuous privacy trends as a function of the underlying parameters.

\section{Method}\label{sec:method}
The formula in Eq.~\eqref{eq:diff_priv} provides strong privacy guarantees. Yet, it requires that we have a way of specifying the probability distribution over the system's observable output. The choice of method for computing this probability distribution depends on several factors, such as system dynamics, time of observation (during transient vs. stationary states), and computational complexity. Here, we will consider two alternative methods, \emph{(i)} one that yields a closed-form equation, but is only applicable to a certain class of system that admits a steady-state snapshot adversary, and \emph{(ii)} one that relies on numeric approximations, but that is generally applicable, and that admits a generic snapshot adversary (see Def.~\ref{def:adversary}).

\subsection{Complex-Balanced Mechanisms}\label{sec:CB} 
We present a method that computes the probability distribution at steady-state, which corresponds to the dynamical system's operational mode.
We show how this can be done very efficiently for a class of CRNs whose stationary distribution can be formulated analytically: \emph{complex-balanced} CRNs.

\begin{lemma}[Th. 3.2~\citep{Anderson:2010ux}]\label{th:CB}
If the system in Eq.~\eqref{eq:dynamics} admits $\mathbf{A}\psi(\myxss{}) = \mathbf{0}$, then the system is complex-balanced, with equilibrium point $\myxss{} \in \mathbb{R}^{\mynumstate}$.
There is one, and only one, equilibrium point in each positive stoichiometric compatibility class; each equilibrium point of a positive stoichiometric compatibility class is locally asymptotically stable relative to its stoichiometric compatibility class.
\end{lemma}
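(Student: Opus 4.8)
The plan is to follow the classical Horn--Jackson--Feinberg argument for complex-balanced mass-action systems, transcribed into the notation of Eq.~\eqref{eq:dynamics}, and to separate the claim into an algebraic part (existence and uniqueness of a positive equilibrium in each positive stoichiometric compatibility class) and a dynamical part (local asymptotic stability). Throughout, write $W = \mathrm{span}_{\mathbb{R}}\{\boldsymbol{\rho}_k - \boldsymbol{\rho}_j : (\mycomplex_j,\mycomplex_k)\in\myreactionset\} = \mathrm{im}\,\Gamma$ for the stoichiometric subspace, so that a positive stoichiometric compatibility class is a set of the form $(\mathsf{x}_0 + W)\cap\mathbb{R}^{\mynumstate}_{>0}$, and note that $\mymat = [\boldsymbol{\rho}_1\,|\,\cdots\,|\,\boldsymbol{\rho}_{\mynumcomplex}]$, so $(\mymat^{\!\top} v)_j = \langle \boldsymbol{\rho}_j, v\rangle$ and $\psi_j(\mathsf{y}) = \psi_j(\myxss{})\,\exp\!\big(\langle\boldsymbol{\rho}_j,\,\ln\mathsf{y}-\ln\myxss{}\rangle\big)$.

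\emph{Algebraic part.} The first step is to identify the set of positive equilibria. Since $\mathbf{A}$ is (minus) the rate-weighted graph Laplacian of the reaction graph, it is block-diagonal over linkage classes, and the complex-balancing hypothesis $\mathbf{A}\psi(\myxss{})=\mathbf{0}$ forces the restriction of $\psi(\myxss{})$ to each linkage-class block to lie in the one-dimensional kernel of that block. If $\ln\mathsf{y}-\ln\myxss{}\in W^{\perp}$ then $\langle\boldsymbol{\rho}_j,\ln\mathsf{y}-\ln\myxss{}\rangle$ is constant across each linkage class, so $\psi(\mathsf{y})$ is obtained from $\psi(\myxss{})$ by a positive diagonal rescaling that is constant on each block; hence $\mathbf{A}\psi(\mathsf{y})=\mathbf{0}$ as well, and conversely the same block structure shows every positive equilibrium satisfies $\ln\mathsf{y}-\ln\myxss{}\in W^{\perp}$. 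Thus the equilibrium set is $E=\{\mathsf{y}\in\mathbb{R}^{\mynumstate}_{>0} : \ln\mathsf{y}-\ln\myxss{}\in W^{\perp}\}$. The second step intersects $E$ with a compatibility class: consider the strictly convex function $V(\mathsf{x})=\sum_{i=1}^{\mynumstate}\big(x_i\ln x_i - x_i - x_i\ln\bar{x}_i + \bar{x}_i\big)$ on $(\mathsf{x}_0+W)\cap\mathbb{R}^{\mynumstate}_{\ge 0}$; its first-order stationarity condition is $\ln\mathsf{x}-\ln\myxss{}\perp W$, i.e. $\mathsf{x}\in E$. Strict convexity gives at most one stationary point, while properness of $V$ on the class together with the fact that $\partial V/\partial x_i = \ln(x_i/\bar x_i)\to-\infty$ as $x_i\to 0$ (so the minimizer is interior) gives at least one. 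This is Birch's theorem, and it yields existence and uniqueness of a positive complex-balanced equilibrium in each positive stoichiometric compatibility class.

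\emph{Dynamical part.} For the same $V$, along solutions of Eq.~\eqref{eq:dynamics} one computes $\dot V = \sum_i \ln(x_i/\bar{x}_i)\,\dot{x}_i = \langle \mymat^{\!\top}(\ln\mathsf{x}-\ln\myxss{}),\,\mathbf{A}\psi(\mathsf{x})\rangle = \sum_j \ln u_j\,(\mathbf{A}\psi(\mathsf{x}))_j$, where $u_j = \psi_j(\mathsf{x})/\psi_j(\myxss{})$. The crucial manipulation is to expand this as a sum over reactions, use the complex-balancing identity at $\myxss{}$ to replace the outflow terms and symmetrize, and thereby bring $\dot V$ to a manifestly non-positive form of the type $\dot V = -\sum_{(\mycomplex_j,\mycomplex_k)\in\myreactionset}\kappa_{\cdot}\,\psi_j(\myxss{})\,\big[\,g(u_k)-g(u_j)-g'(u_j)(u_k-u_j)\,\big]$ with $g$ strictly convex (coming from $z\mapsto z\ln z$), each bracket being $\ge 0$ by convexity (equivalently, by the log-sum / AM--GM inequality). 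Hence $\dot V\le 0$, with equality exactly when $u_j$ is constant on each linkage class, which by the algebraic part is precisely the condition $\mathsf{x}\in E$. LaSalle's invariance principle then confines any trajectory of a positive compatibility class to the largest invariant subset of $\{\dot V=0\}\cap(\mathsf{x}_0+W) = E\cap(\mathsf{x}_0+W) = \{\myxss{}\}$; since $V$ is moreover a strict Lyapunov function in a neighbourhood of $\myxss{}$, this yields local asymptotic stability of $\myxss{}$ relative to its stoichiometric compatibility class.

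\emph{Main obstacle.} The delicate step is the algebraic rearrangement that turns $\dot V$ into the non-positive sum above: one must group the Laplacian-type quadratic form by linkage classes, invoke the complex-balancing identity to cancel the off-diagonal graph structure, and then apply exactly the right convexity estimate. The remaining ingredients (Birch's theorem for the intersection count, and the LaSalle argument) are routine once this identity is in hand; care is also needed in handling the case split in the definition of $\mathbf{A}$ and in stating the stability conclusion \emph{relative to the compatibility class} rather than globally.
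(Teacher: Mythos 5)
The paper does not prove this lemma at all: it is imported verbatim as Theorem~3.2 of the cited reference (Anderson, Craciun and Kurtz), which in turn rests on the classical Horn--Jackson theory of complex-balanced mass-action systems. So there is no in-paper argument to compare against; what you have written is, in outline, exactly the standard proof that the cited theorem relies on -- the characterization of the positive equilibrium set as $E=\{\mathsf{y}>0:\ln\mathsf{y}-\ln\myxss{}\in W^{\perp}\}$ via the block (linkage-class) structure of the weighted Laplacian $\mathbf{A}$, Birch's theorem applied to the pseudo-Helmholtz function $V$ to get exactly one point of $E$ in each positive stoichiometric compatibility class, and the Lyapunov/LaSalle argument with the Bregman-type rearrangement of $\dot V$ for local asymptotic stability relative to the class. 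Your identification of the ``main obstacle'' (the symmetrization of $\dot V$ using complex balancing at $\myxss{}$ plus a convexity inequality) is also the right one.

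One step is glossed too quickly. You claim that ``the same block structure shows every positive equilibrium satisfies $\ln\mathsf{y}-\ln\myxss{}\in W^{\perp}$.'' The block argument does handle every positive \emph{complex-balanced} point (i.e.\ every $\mathsf{y}$ with $\mathbf{A}\psi(\mathsf{y})=\mathbf{0}$), because the kernel of each strongly connected block is one-dimensional and positively spanned. But an equilibrium of Eq.~\eqref{eq:dynamics} only satisfies $\mymat\mathbf{A}\psi(\mathsf{y})=\mathbf{0}$, i.e.\ $\mathbf{A}\psi(\mathsf{y})\in\ker\mymat$, and the uniqueness claim in the lemma is about equilibria of the ODE. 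Showing that every positive equilibrium of a complex-balanced system is itself complex-balanced is a separate (and the genuinely nontrivial) step of Horn--Jackson; the usual route is to evaluate your $\dot V$ expression at a putative equilibrium, where it must vanish, and use the strictness of the convexity inequality to force $u_j$ constant on each linkage class, hence $\mathbf{A}\psi(\mathsf{y})=\mathbf{0}$. With that step made explicit, your sketch is a faithful reconstruction of the result the paper cites.
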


For systems that satisfy Lemma~\ref{th:CB}, we can follow Theorem (4.2) of~\citep{Anderson:2010ux} to define the stationary distribution $\bar{\pi}_{\mydb}(\mathbf{x}) = \lim_{t \rightarrow \infty} \mathbb{P}[f_{\mycrn}(\mathbf{x_0}(\mydb),t)]$ of the stochastically modeled system.
If the system is irreducible, this stationary distribution consists of a product of Poisson distributions and is given by
\begin{eqnarray} \label{eq:crn_stationary}
	\bar{\pi}_{\mydb}(\mathbf{x}) = \prod_{i=1}^{\mynumstate} \frac{\myxss{i}^{x_i}}{x_i !} e^{-\myxss{i}},~\mathbf{x}\in \mathcal{X}_{\mycrn}(\mydb)
\end{eqnarray}
where $\mathcal{X}_{\mycrn}(\mydb)$ is the set of all possible population vectors $\mathbf{x}$ that can arise from the CRN \mycrn and the robot species specified by \mydb. We note that when the system is reducible, a similar equation exists, see~\citep{Anderson:2010ux}.

We use this description to formulate a closed-form measure of the loss of privacy.
\begin{proposition}\label{th:leakage}
{If a networked robotic system defined by database \mydb and CRN \mycrn is complex-balanced and irreducible, and is observed through the identity query $q_{\mycrn}(\mathbf{x}) = \mathbf{x}$, then its leakage at steady-state is }
\begin{eqnarray} \label{eq:leakage}
\mathcal{L}(\mydb) =  \mathop{\mathrm{max}}\limits_{\substack{\mydb' \in \mathrm{Adj}(\mydb) \\ \mathbf{x} \in \mathcal{X}_{\mycrn}(\mydb) \cup \mathcal{X}_{\mycrn}(\mydb')}} \left | \sum_{i=1}^{\mynumstate} x_i \mathrm{ln} \frac{\myxss{i}}{\myxss{i}'} - \myxss{i} + \myxss{i}'  \right |
\end{eqnarray}
{where \myxss{} and $\myxss{}'$ are the population steady-states from Eq.~\eqref{eq:dynamics} resulting from \mydb and its adjacent database \mydb'.}
\end{proposition}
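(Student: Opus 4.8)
The plan is to substitute the closed-form stationary distribution from Eq.~\eqref{eq:crn_stationary} directly into the differential privacy formula Eq.~\eqref{eq:diff_priv}. Since the query is the identity, $\mathbb{P}[q \circ f_{\mycrn}(\mathbf{x_0}(\mydb),\tau)]$ evaluated at steady-state is exactly $\bar{\pi}_{\mydb}(\mathbf{x})$, the product of Poisson distributions with means $\myxss{i}$; similarly for $\mydb'$ with means $\myxss{i}'$. The leakage is then the maximum over adjacent databases $\mydb'$ of $\left|\ln \bar{\pi}_{\mydb}(\mathbf{x}) - \ln \bar{\pi}_{\mydb'}(\mathbf{x})\right|$, where crucially the maximum must also range over the observation $\mathbf{x}$ itself: the definition requires the bound to hold for every possible observed output, so the worst-case $\mathbf{x}$ enters the expression. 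I would make this explicit and note that $\mathbf{x}$ ranges over $\mathcal{X}_{\mycrn}(\mydb) \cup \mathcal{X}_{\mycrn}(\mydb')$, since those are the population vectors reachable under either composition.

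The core computation is then taking the logarithm of the ratio of two product-form Poisson pmfs. Writing
\begin{equation}
\ln \frac{\bar{\pi}_{\mydb}(\mathbf{x})}{\bar{\pi}_{\mydb'}(\mathbf{x})} = \sum_{i=1}^{\mynumstate} \left( x_i \ln \myxss{i} - \ln(x_i!) - \myxss{i} \right) - \sum_{i=1}^{\mynumstate} \left( x_i \ln \myxss{i}' - \ln(x_i!) - \myxss{i}' \right),
\end{equation}
the factorial terms $\ln(x_i!)$ cancel identically because the same observation $\mathbf{x}$ appears in numerator and denominator. What remains is $\sum_{i=1}^{\mynumstate} \left( x_i \ln \frac{\myxss{i}}{\myxss{i}'} - \myxss{i} + \myxss{i}' \right)$, which upon taking absolute value and maximizing yields precisely Eq.~\eqref{eq:leakage}. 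I would also remark that by Lemma~\ref{th:CB} the equilibria $\myxss{}$ and $\myxss{}'$ exist and are unique (within their stoichiometric compatibility classes), so the expression is well-defined, and that irreducibility guarantees the product-form in Eq.~\eqref{eq:crn_stationary} applies directly.

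The main obstacle, such as it is, is bookkeeping rather than depth: one must be careful that the set over which $\mathbf{x}$ is maximized is exactly right. An observation $\mathbf{x} \in \mathcal{X}_{\mycrn}(\mydb)$ that is \emph{not} in $\mathcal{X}_{\mycrn}(\mydb')$ has $\bar{\pi}_{\mydb'}(\mathbf{x}) = 0$, making the ratio infinite — this is the degenerate case Dwork et al.\ warn about, and it corresponds to the leakage being unbounded (the observation perfectly distinguishes the two databases). I would point out that Eq.~\eqref{eq:leakage} still formally captures this (the summand $x_i \ln(\myxss{i}/\myxss{i}')$ is finite because both steady-state means are strictly positive by Lemma~\ref{th:CB}, so the \emph{formula} is finite even though the true leakage is not — hence the need for the smoothing constant $\nu$ mentioned after Eq.~\eqref{eq:diff_priv} when such support mismatches occur). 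The only genuine subtlety worth flagging is that $\mathcal{X}_{\mycrn}(\mydb)$ may be infinite, so the maximum in Eq.~\eqref{eq:leakage} is in general a supremum; since each term grows linearly in $x_i$ while the reachable set is constrained by conservation laws (fixed total robot counts per type), the supremum is attained at an extreme point of the stoichiometric compatibility polytope, which I would note keeps the evaluation finite and tractable.
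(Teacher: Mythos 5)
Your proof is correct and follows essentially the same route as the paper's: substitute the product-form Poisson stationary distribution of Eq.~\eqref{eq:crn_stationary} into Eq.~\eqref{eq:diff_priv} under the identity query, note that the maximum must also range over $\mathbf{x} \in \mathcal{X}_{\mycrn}(\mydb) \cup \mathcal{X}_{\mycrn}(\mydb')$, and cancel the $\ln(x_i!)$ terms in the log-ratio to obtain Eq.~\eqref{eq:leakage}. Your additional remarks on support mismatch, the finiteness of the formula versus the true leakage, and attainment of the supremum via linearity are careful elaborations beyond what the paper records, but they do not change the argument.
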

\begin{proof}
Starting with Eq.~\eqref{eq:diff_priv}, and using query  $q_{\mycrn}(\mathbf{x}) = \mathbf{x}$, we have
	\begin{equation}
		\mathcal{L}(\mydb) = \mathop{\mathrm{max}}\limits_{\mydb' \in \mathrm{Adj}(\mydb)} \left | \mathrm{ln} \frac{\mathbb{P}[f_{\mycrn}(\mathbf{x_0}(\mydb),\tau)]}{\mathbb{P}[f_{\mycrn}(\mathbf{x_0}(\mydb'),\tau)]} \right |.
	\end{equation}
At steady-state we have $\lim_{\tau \rightarrow \infty} \mathbb{P}[f_{\mycrn}(\mathbf{x_0}(\mydb),\tau)] = \bar{\pi}_{\mydb}(\mathbf{x})$, hence
	\begin{equation}
		\mathcal{L}(\mydb) = \mathop{\mathrm{max}}\limits_{\substack{\mydb' \in \mathrm{Adj}(\mydb) \\ \mathbf{x} \in \mathcal{X}_{\mycrn}(\mydb) \cup \mathcal{X}_{\mycrn}(\mydb')}} \left | \mathrm{ln} \frac{\bar{\pi}_{\mydb}(\mathbf{x})}{\bar{\pi}_{\mydb'}(\mathbf{x})} \right |.
	\end{equation}
Continuing with Eq.~\eqref{eq:crn_stationary} we get
	\begin{equation}
		\mathcal{L}(\mydb) = \!\!\!\!\!\!\!\!\!\!\!\!\!\! \mathop{\mathrm{max}}\limits_{\substack{\mydb' \in \mathrm{Adj}(\mydb) \\ \mathbf{x} \in \mathcal{X}_{\mycrn}(\mydb) \cup \mathcal{X}_{\mycrn}(\mydb')}} \left | \mathrm{ln}\left( \prod_{i=1}^{\mynumstate} \frac{\myxss{i}^{x_i}}{x_i !} e^{-\myxss{i}}\right) -  \mathrm{ln}\left( \prod_{i=1}^{\mynumstate} \frac{\myxss{i}'^{x_i}}{x_i !} e^{-\myxss{i}'}\right) \right |,
	\end{equation}
	which yields Eq.~\eqref{eq:leakage}.
	\qed
\end{proof}
\begin{corollary}\label{th:leakage_y}
	{If a networked robotic system defined by database \mydb and CRN \mycrn is complex-balanced and irreducible, and is observed through query $q_{\mycrn}(\mathbf{x}) = \mathbf{y}$, with $\mathbf{y} \in \mathbb{N}_{\geq 0}^{\mynumobstate}$ and each $y_i$ of the form $\sum_{j \in \Omega_i} x_j$, with $\Omega_i \subset \{1,\ldots,\mynumstate\}$, 
	and all $\Omega_i$ disjoint is}
	\begin{equation}
		\mathcal{L}(\mydb) = \mathop{\mathrm{max}}\limits_{\substack{\mydb' \in \mathrm{Adj}(\mydb) \\ \mathbf{y}}} \left | \mathrm{ln} \frac{\mathop{\sum}\limits_{{\{ \mathbf{x} | \mathbf{y} = q_{\mathcal{N}}(\mathbf{x})  \wedge \mathbf{x} \in \mathcal{X}_\mathcal{N}(\mydb) \}}} \bar{\pi}_{\mydb}(\mathbf{x})}
		{\mathop{\sum}\limits_{{\{ \mathbf{x} | \mathbf{y} = q_{\mathcal{N}}(\mathbf{x}) \wedge \mathbf{x} \in \mathcal{X}_\mathcal{N}(\mydb') \}}} \bar{\pi}_{\mydb'}(\mathbf{x})} \right |.
	\end{equation}
\end{corollary}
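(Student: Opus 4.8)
The plan is to mirror the proof of Proposition~\ref{th:leakage}, inserting one extra marginalization step to handle the non-identity query. I would start from the general leakage definition in Eq.~\eqref{eq:diff_priv}, specialized to the query $q_{\mycrn}(\mathbf{x}) = \mathbf{y}$, and first pass to the stationary regime: since we are dealing with a steady-state snapshot adversary on a complex-balanced, irreducible system, Lemma~\ref{th:CB} and Eq.~\eqref{eq:crn_stationary} give $\lim_{\tau\to\infty}\mathbb{P}[f_{\mycrn}(\mathbf{x_0}(\mydb),\tau)] = \bar{\pi}_{\mydb}(\mathbf{x})$ for every population vector $\mathbf{x}\in\mathcal{X}_{\mycrn}(\mydb)$, and likewise for $\mydb'$.

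The key step is to express the probability of an \emph{observable} output in terms of $\bar{\pi}$. Since $q_{\mycrn}$ is a deterministic function of $\mathbf{x}$ — and the disjointness of the index sets $\Omega_i$ makes it well defined, each state population feeding exactly one component of $\mathbf{y}$ — the event $\{q_{\mycrn}\circ f_{\mycrn} = \mathbf{y}\}$ is the disjoint union, over $\mathbf{x}$ in the preimage $\{\mathbf{x}\in\mathcal{X}_{\mycrn}(\mydb)\mid \mathbf{y} = q_{\mycrn}(\mathbf{x})\}$, of the events $\{f_{\mycrn} = \mathbf{x}\}$. By the law of total probability (countable additivity), the steady-state probability of $\mathbf{y}$ under $\mydb$ is $\sum_{\{\mathbf{x}\mid \mathbf{y}=q_{\mycrn}(\mathbf{x})\,\wedge\,\mathbf{x}\in\mathcal{X}_{\mycrn}(\mydb)\}}\bar{\pi}_{\mydb}(\mathbf{x})$, and similarly with $\mydb'$. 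Substituting these two marginal sums into the ratio inside Eq.~\eqref{eq:diff_priv} and taking the maximum over $\mydb'\in\mathrm{Adj}(\mydb)$ and over all realizable observable outputs $\mathbf{y}$ then yields the stated expression; if desired, one can additionally plug the Poisson product form~\eqref{eq:crn_stationary} into each sum.

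The step I expect to require the most care is pinning down the range of $\mathbf{y}$ in the outer maximum together with the degenerate cases it creates: a $\mathbf{y}$ reachable under $\mydb$ but not under $\mydb'$ (or vice versa) makes one marginal sum vanish and sends the log-ratio to $+\infty$, which is exactly the stringency of differential privacy remarked on after Eq.~\eqref{eq:diff_priv} and is smoothed out in the numerical evaluations by the constant $\nu$. A secondary point worth stating explicitly is that, unlike in Proposition~\ref{th:leakage}, the logarithm does not distribute over the inner sums, so even after substituting~\eqref{eq:crn_stationary} for $\bar{\pi}_{\mydb}$ and $\bar{\pi}_{\mydb'}$ no single-summation closed form is available — the corollary is necessarily left as a ratio of sums.
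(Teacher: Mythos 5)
Your argument is correct and is exactly the one the paper intends: the corollary is stated without proof, as an immediate consequence of Proposition~\ref{th:leakage}'s setup obtained by marginalizing the stationary distribution $\bar{\pi}_{\mydb}$ of Eq.~\eqref{eq:crn_stationary} over the preimage $\{\mathbf{x} \mid \mathbf{y} = q_{\mycrn}(\mathbf{x})\}$ before forming the log-ratio in Eq.~\eqref{eq:diff_priv}. Your closing observations --- that the logarithm no longer distributes over the inner sums (hence no single-product closed form) and that unreachable $\mathbf{y}$ under one database sends the ratio to infinity, which the paper handles with the smoothing constant $\nu$ --- are consistent with the paper's own remarks following the corollary and the definition of leakage.
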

This formulation, even though less compact than Eq.~\eqref{eq:leakage} above, still provides a fast means of computing the leakage for complex-balanced systems --- in particular, the alternative to using this formulation is to compute the probability mass function $\mathbb{P}$ via the Chemical Master Equation~\citep{Prorok:2016vw}, which, in our experience, is at least one order of magnitude slower.
Moreover, we note that Eq.~\eqref{eq:leakage} is linear in $\mathbf{x}$, and can, thus, be solved by integer linear programming (ILP) methods. In summary, the analytical formulation for the privacy of complex-balanced systems allows us to compute the leakage efficiently. 

\subsection{General Mechanisms}\label{sec:nonCB} 
In the general case of nonlinear dynamics, we may not be able to show that the underlying CRN is complex-balanced, and hence, may not be able to derive a stationary probability distribution. Also, we may explicitly need to analyze privacy during transient, non-steady-state, behavior. In this section, we detail a method that enables the analysis of privacy for heterogeneous systems with arbitrary dynamics.

The most widely-used computational method for obtaining the time-dependent behavior of the state of a CRN is the Stochastic Simulation Algorithm (SSA)~\citep{Gillespie:1977ww}. The basic idea behind this algorithm is to use the propensity rates to evaluate which reaction is most likely to happen within a given time interval. The result of the algorithm is a sample state trajectory. To obtain meaningful statistical information, the algorithm needs to be repeated a large number of times, which is computationally expensive overall.
An alternative approach consists of evaluating the CME, cf. Eq.~\ref{eq:cme}.
Since approaches based on the evaluation of the CME tend to be more precise than those based on SSA (when they are computationally tractable), we adopt a solution that builds on the former approach. The CME is given by the linear ordinary differential equation in Eq.~\eqref{eq:cme}.
As previously noted in Section~\ref{sec:dynamics}, when the number of possible system-level states $|\mathcal{X}_{\mycrn}(\mydb)|$ is small, it is possible to obtain a closed-form solution to Eq.~\eqref{eq:cme}.
However, when $|\mathcal{X}_{\mycrn}(\mydb)|$ is large or even infinite, it may become computationally intractable to solve the system. In such cases, we can resort to Finite State Projection (FSP) methods~\citep{Munsky:2006es} that approximate the solution by compressing the number of possible states (and, hence, also the size of $\mathbf{K}$).
Finally, we compute the probability of the observable state $\mathbf{y}$ for a given time $\tau$, which we can then plug into our formula for differential privacy, Eq.\eqref{eq:diff_priv}.
This is straightforward since $\pi_{\mathbf{x}}(\tau)$ is equivalent to $\mathbb{P}[\mathbf{x}(\tau)]$, and thus, $\mathbb{P}[q \circ f_{\mycrn}(\mathbf{x_0}(\mydb),\tau)]$ is equivalent to $\pi_{\mathbf{y}}(\tau)$, where $\pi_{\mathbf{y}}(\tau) = \sum_{\forall \mathbf{x}\,\mathrm{s.t.}\,\mathbf{y} = q(\mathbf{x})} \pi_{\mathbf{x}}(\tau)$.

\section{Case Study: Complex-Balanced Systems}\label{sec:aggregation}
In this first case-study, we focus on complex-balanced systems and turn our attention to a particular type of networked robotic system that can be described by a \emph{collaboration-DAG}. See an example in Fig.~\ref{fig:agg_graph}.
\begin{SCfigure}
	\psfrag{A}[cc][][0.8]{$a^{\{1\}}$}
	\psfrag{B}[cc][][0.8]{$a^{\{2\}}$}
	\psfrag{C}[cc][][0.8]{$a^{\{3\}}$}
	\psfrag{D}[cc][][0.8]{$a^{\{4\}}$}
	\psfrag{E}[cc][][0.8]{$a^{\{5\}}$}
	\psfrag{F}[cc][][0.8]{$a^{\{1,2\}}$}
	\psfrag{G}[cc][][0.8]{$a^{\{3,4,5\}}$}
	\psfrag{H}[cc][][0.8]{$a^{\{1,1,2\}}$}
	\psfrag{I}[cc][][0.8]{$a^{\{1-5\}}$}
	\psfrag{1}[cc][][0.8]{$r_{1}, r_{2}$}\psfrag{2}[cc][][0.9]{$r_{3}, r_{4}$}
	\psfrag{3}[cc][][0.8]{$r_{5}, r_{6}$}\psfrag{4}[cc][][0.9]{$r_{7}, r_{8}$}
	\includegraphics[width=0.45\columnwidth]{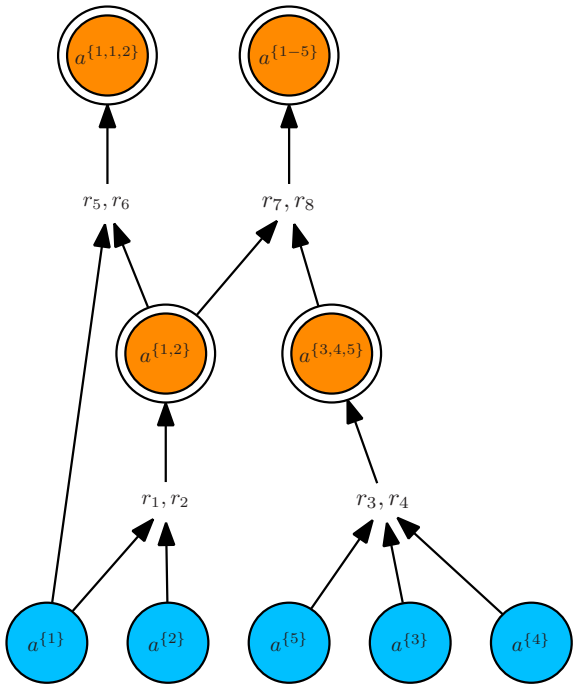}
	\caption{Example of a collaboration mechanism that is represented as a directed acyclic graph. The CRN has three elementary states and four compound states (represented by doubly outlined circles). 
	There are $L=4$ linkage classes, $\mynumcomplex = 8$ complexes, and $\mathrm{rank}(\Gamma) = 4$.
\label{fig:agg_graph}}
\end{SCfigure}

\begin{definition}[\underline{Collaboration-DAG}]\label{def:dag}
	A collaboration-DAG is a topological representation of a CRN that defines a collaboration mechanism. It is a
	directed acyclic graph, such that all \textit{state nodes} $A_i$ that compose a complex $C_j$ are connected to a \textit{reaction node} $(r_m, r_n)$ (representing both a forward and backward reaction) if and only if  there exists two reactions {\small $C_j \myarrow{\mypropx{m}}{\mypropx{n}} C_k$}, with $r_m, r_n > 0$.
\end{definition}
To apply Corollary~\ref{th:leakage_y}, we must first show that this type of network is complex-balanced.
\begin{proposition} The interaction of heterogeneous robots in a networked team with dynamics described by a CRN is a complex-balanced mechanism if the underlying CRN can be represented by a collaboration-DAG.
\end{proposition}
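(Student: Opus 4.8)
The plan is to invoke the Deficiency Zero Theorem (or more precisely the characterization of complex-balanced networks via deficiency and weak reversibility, as in Feinberg's work and as used by Anderson et al.\ in the reference underlying Lemma~\ref{th:CB}). Recall that the \emph{deficiency} of a CRN is $\delta = \mynumcomplex - L - \myrank{\Gamma}$, where $L$ is the number of linkage classes. A weakly reversible CRN with deficiency zero is complex-balanced at \emph{every} positive equilibrium, for \emph{every} choice of mass-action rate constants. So the whole proposition reduces to two structural claims about a CRN whose topology is a collaboration-DAG: (i) it is weakly reversible, and (ii) it has deficiency zero.

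Weak reversibility is immediate from Definition~\ref{def:dag}: each reaction node represents a \emph{pair} $C_j \myarrow{\mypropx{m}}{\mypropx{n}} C_k$ with both $r_m, r_n > 0$, so every reaction has its reverse present; hence every linkage class is strongly connected, which is exactly weak reversibility. The substance of the argument is the deficiency-zero claim, and this is where the acyclicity and the ``DAG'' structure must be used. First I would make the bookkeeping precise: in a collaboration-DAG the complexes come in two flavors --- ``input'' complexes that are sums of elementary states $a^{\{s\}}$ (or more generally the states feeding a collaboration, possibly with multiplicity, like $a^{\{1,1,2\}}$), and ``output'' complexes that are single compound states $a^{\mathcal{I}}$. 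Each reaction pair links an input complex to the compound-state complex it produces. Because the graph is acyclic and (by construction) each compound state is the product of exactly one collaboration event, each linkage class is a small tree --- in the simplest case a single edge (two complexes), though a compound state can itself feed a further collaboration, giving a path. The key counting step is: within each linkage class with $n_\ell$ complexes, the stoichiometric vectors $\boldsymbol{\rho_k} - \boldsymbol{\rho_j}$ of its $n_\ell - 1$ spanning reactions are linearly independent, and moreover the span of one linkage class is independent of the spans of the others (the ``cross-class'' independence). Granting both, $\myrank{\Gamma} = \sum_\ell (n_\ell - 1) = \mynumcomplex - L$, i.e.\ $\delta = 0$.

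The main obstacle is precisely that second independence claim --- that the stoichiometric subspaces contributed by distinct linkage classes intersect only in $\{\mathbf{0}\}$ --- since within-class independence follows easily from the tree structure (the reaction vectors of a tree of complexes are independent because each reaction vector $\boldsymbol{\rho_k}-\boldsymbol{\rho_j}$ introduces the ``new'' complex $C_k$ with a nonzero coordinate not touched by the others, after choosing a suitable root/ordering). For the cross-class part I would exploit the DAG's layered structure: order the collaborations by a topological sort, so that the compound state produced by a collaboration appears as a coordinate that no earlier collaboration's reaction vector involves as a product. Then a would-be linear dependence $\sum_\ell \mathbf{v}_\ell = \mathbf{0}$ with $\mathbf{v}_\ell$ in the span of class $\ell$ can be peeled off one layer at a time: looking at the coordinate of the ``deepest'' compound state forces the corresponding coefficients to vanish, and induction on the topological order finishes it. I would also need to handle multiplicities (states like $a^{\{1,1,2\}}$) carefully, but these only change the integer entries of $\boldsymbol{\rho}$, not the support structure that drives the independence argument. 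Once $\delta = 0$ and weak reversibility are established, the Deficiency Zero Theorem gives complex-balancing for all rate constants, and Lemma~\ref{th:CB} / Eq.~\eqref{eq:crn_stationary} then apply, which is what Corollary~\ref{th:leakage_y} needs.
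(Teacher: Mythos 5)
Your proposal follows essentially the same route as the paper: invoke the deficiency-zero/weak-reversibility characterization of complex balancing, get weak reversibility for free from the forward--backward reaction pairs in Definition~\ref{def:dag}, and establish $\delta = 0$ by counting --- the paper notes $\mynumcomplex = 2L$ and argues $\myrank{\Gamma} = L$ because each linkage class introduces a new compound state, which is exactly the ``fresh coordinate'' independence argument you spell out via the topological sort. Your treatment of the cross-class rank independence is in fact more explicit than the paper's one-line assertion, but it is the same idea.
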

\begin{proof}
According to Theorem 4.1 of~\citep{Siegel:2000ui}, a CRN is complex-balanced if \emph{(i)} it is weakly reversible and \emph{(ii)} it has deficiency zero. Condition \emph{(i)} requires all complexes to be connected via some reaction pathway (cf. Def. 2.2 in~\citep{Anderson:2010ux}). If all compound states can be decomposed as well as composed, this is trivially satisfied.
The deficiency of a reaction network is $\delta = \mynumcomplex - L - \mathrm{rank}(\Gamma)$, which is the number of complexes minus the number of linkage classes, each of which is a set of complexes connected by reactions, minus the network rank, which is the rank of the stoichiometry matrix $\Gamma$. Hence, we will show that $\mynumcomplex = L + \mathrm{rank}(\Gamma)$.
From Def.~\ref{def:dag} it follows that the number of linkage classes $L$ is equal to the number of reaction nodes, and the number of complexes \mynumcomplex is equal to twice the number of reaction nodes. Thus, $\mynumcomplex = 2L$, and it remains to be shown that $\mathrm{rank}(\Gamma) = L$. Matrix $\Gamma$ is of size $\mynumstate \times \mynumreaction$, with $\mynumreaction = 2L$ (the network is weakly reversible). Since each new linkage class includes a new compound state, there are $L$ linearly independent columns in $\Gamma$, and, hence, $\mathrm{rank}(\Gamma) = L$.
\qed
\end{proof}

\subsection{Example}
\begin{SCfigure}
\psfrag{A}[cc][][0.7]{$a^{\{1\}}$}
\psfrag{B}[cc][][0.7]{$a^{\{2\}}$}
\psfrag{C}[cc][][0.7]{$a^{\{3\}}$}
\psfrag{D}[cc][][0.7]{$a^{\{1,2\}}$}
\psfrag{E}[cc][][0.7]{$a^{\{1,2,3\}}$}
\psfrag{1}[cc][][0.7]{$r_1, r_2$}
\psfrag{2}[lc][][0.7]{$r_3, r_4$}
{\includegraphics[width=0.33\columnwidth]{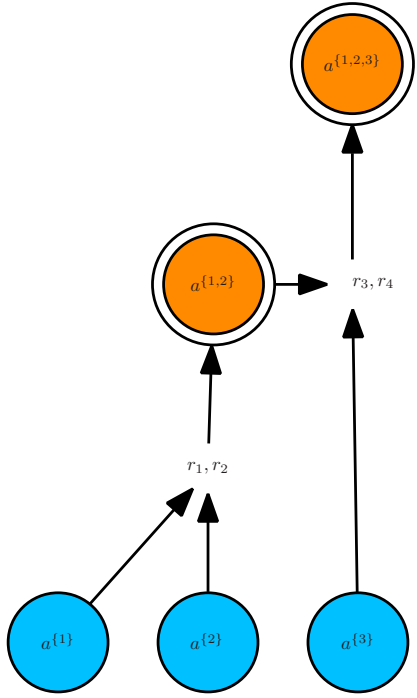}}
\caption{CRN topology corresponding to Eq.~\eqref{eq:reactions_cs1} with three elementary states and two compound states (with doubly outlines circles). 
\label{fig:graph_CB_case_study}} 
\end{SCfigure}
We consider the example shown in Fig.~\ref{fig:graph_CB_case_study}. The system is composed of three types, $\mathcal{S}= \{1,2,3\}$. Compound states are formed with one robot per type, and with types 1 and 2 interacting prior to type 3. This behavior is formalized with the following reactions:
\begin{equation} \label{eq:reactions_cs1}
	\myae{1} + \myae{2} \myarrow{\mypropx{1}}{\mypropx{2}} \myae{1,2} \mathrm{,~~~~~}
	\myae{1,2} + \myae{3} \myarrow{\mypropx{3}}{\mypropx{4}} \myae{1,2,3}
\end{equation}
The states of this system are $\mystateset = \{\myae{1}, \myae{2}, \myae{3}, \myae{1,2}, \myae{1,2,3}\}$.
Our population vector keeps track of the number of robots per state, and is written
\begin{equation}
	\mathbf{x} = [\myxe{1}, \myxe{2}, \myxe{3},\myxe{1,2}, \myxe{1,2,3}]
\end{equation}
We consider an adversary who is able to observe the number of elementary (non-collaborative) robots, the number of collaborations involving 2 robots, and the number of collaborations involving 3 robots. Hence, we formulate the observable data as $\mathbf{y} = [y_1, y_2, y_3]$ with
\begin{equation} \label{eq:observed_cs1}
	y_1 = \myxe{1} + \myxe{2} + \myxe{3},~y_2 = \myxe{1,2}, ~y_3 = \myxe{1,2,3}.
\end{equation}
Later, in Section~\ref{sec:eval_tree}, we generalize these equations with $y_i = \sum_{\{\mathcal{I} | |\mathcal{I}| = i\}} x^{\mathcal{I}}$ for all $a^{\mathcal{I}} \in \mathcal{A}$, which counts the number of compounds of a given size.

\subsubsection{Analysis}
We compute the steady-state \myxss{} by solving the deterministic system\\ \mbox{$\mymat\mathbf{A_{\kappa}}\psi(\myxss{}) = \mathbf{0}$}:
\begin{equation}\label{eq:cs1_system}
\left [
  \begin{tabular}{c c c c}
  1~ & 0~ & 0~ & 0\\
  1~ & 0~ & 0~ & 0 \\
  0~ & 1~ & 1~ & 0\\
  0~ & 1~ & 0~ & 0\\
  0~ & 0~ & 0~ & 1
  \end{tabular}
\right ] \cdot
\left [
  \begin{tabular}{@{}c@{}@{}c@{}@{}c@{}@{}c@{}}
  $-\kappa_1$ & 0 & $\kappa_2$ & 0\\
  0 & $-\kappa_3$ & 0 & $\kappa_4$ \\
  $\kappa_1$ & 0 & $-\kappa_2$ & 0\\
  0 & $\kappa_3$ & 0 & $-\kappa_4$
  \end{tabular}
\right ] \cdot
\left [
  \begin{tabular}{@{}c@{}}
  \myxess{1} \myxess{2} \\
  \myxess{1,2} \myxess{3}\\
  \myxess{1,2}\\
  \myxess{1,2,3}
  \end{tabular}
\right ] = \mathbf{0}.
\end{equation}
Since the number of robots per type is constant, we have
\begin{eqnarray}\label{eq:cs1_cons}
	\myxess{1} + \myxess{1,2} + \myxess{1,2,3} &=& \mynumrobi{1} \nonumber\\
	\myxess{2} + \myxess{1,2} + \myxess{1,2,3} &=& \mynumrobi{2} \nonumber\\
	\myxess{3} + \myxess{1,2,3} &=& \mynumrobi{3}.
\end{eqnarray}
From the equations in~\eqref{eq:cs1_cons}, we can express the variables \myxess{1}, \myxess{2}, \myxess{3} as a function of \myxess{1,2} and \myxess{1,2,3}. Using one of the five equations in~\eqref{eq:cs1_system}, we can then express \myxess{1,2,3} as a function of \myxess{1,2} through substitution. Then, proceeding with yet another equation in~\eqref{eq:cs1_system}, we write the quartic equation
{\small
\begin{eqnarray}\label{eq:quartic}
	0 = \kappa_1 &&\left(\mynumrobi{1} - \mynumrobi{3} - \myxess{1,2} + \frac{\kappa_4 \mynumrobi{3}}{\kappa_3 \myxess{1,2}+\kappa_4} \right)\cdot \nonumber\\
	&&\left( \mynumrobi{2} - \mynumrobi{3} - \myxess{1,2}  + \frac{\kappa_4 \mynumrobi{3}}{\kappa_3 \myxess{1,2}+\kappa_4} \right) - \kappa_2 \myxess{1,2},
\end{eqnarray}}%
which only depends on variable \myxess{1,2}. 
Of the four possible solutions to Eq.~\eqref{eq:quartic}, there is only a single all-positive solution (which corresponds to the single equilibrium of the complex-balanced system).
Finally, we can compute the leakage $\mathcal{L}(\mydb)$ of the observed system according to Corollary~\ref{th:leakage_y}.

\subsubsection{Evaluation}\label{sec:eval_cs1}
The observable state is a function of the system-level state, and is defined by the number of robots per type \mynumrobs, and by reaction rates $\boldsymbol{\kappa}$. Hence, we vary these values to identify their relation to privacy. We note that this relationship is made mathematically evident in Eq.~\eqref{eq:quartic}.
\newcommand{\myhh}{4.5cm}
\begin{figure}
	\centering
	\psfrag{a}[cc][][0.7]{$\mynumrobi{1}$}
	\psfrag{b}[cc][][0.7]{$\mynumrobi{2}$}
	\psfrag{c}[cc][][0.7]{$\mynumrobi{3}$}
	\psfrag{L}[cc][][0.8][90]{$\mathcal{L}$}
	\subfigure[]{\includegraphics[height=\myhh]{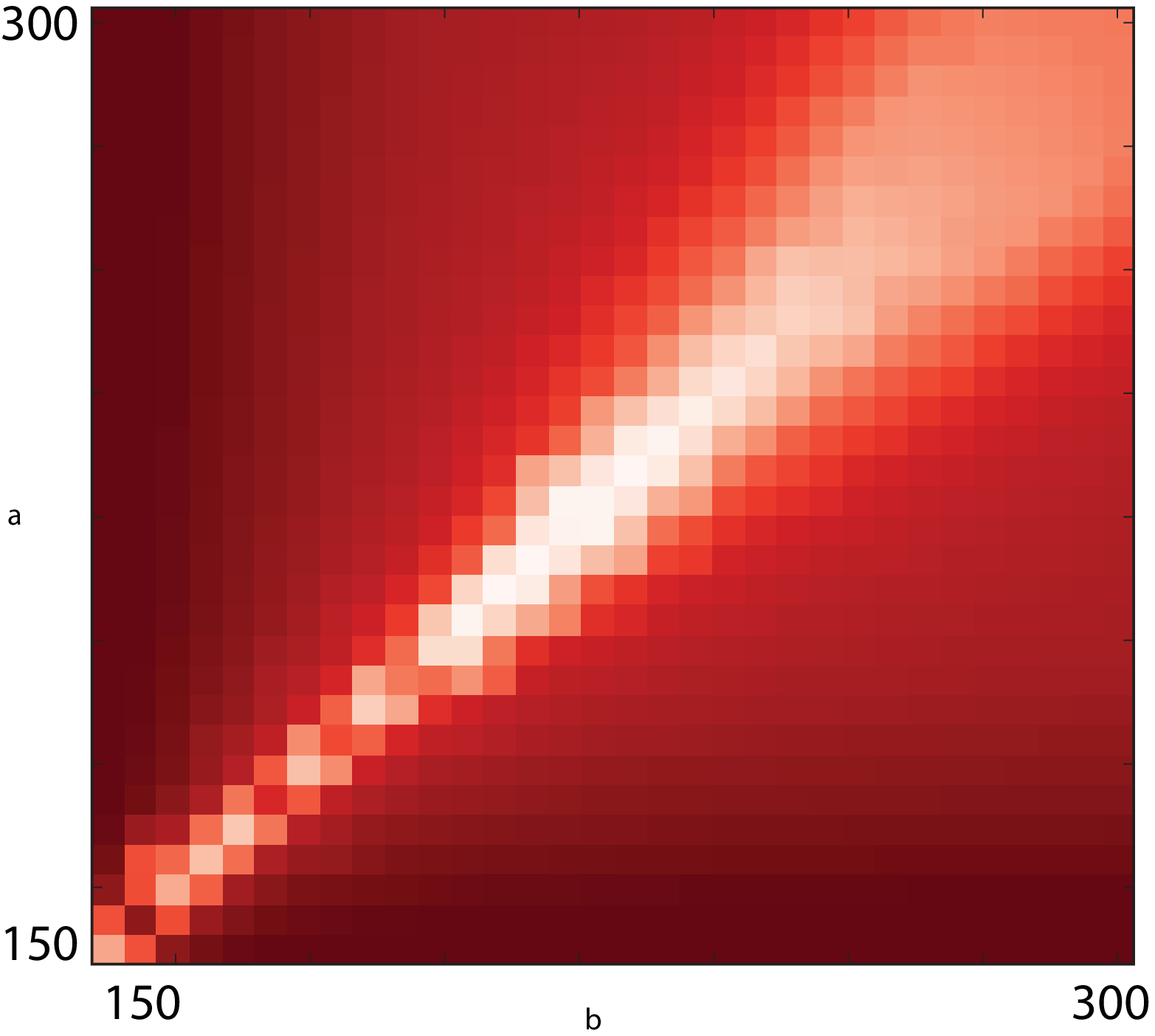}\label{fig:results_example_a}}\hspace{1cm}
	\subfigure[]{\includegraphics[height=\myhh]{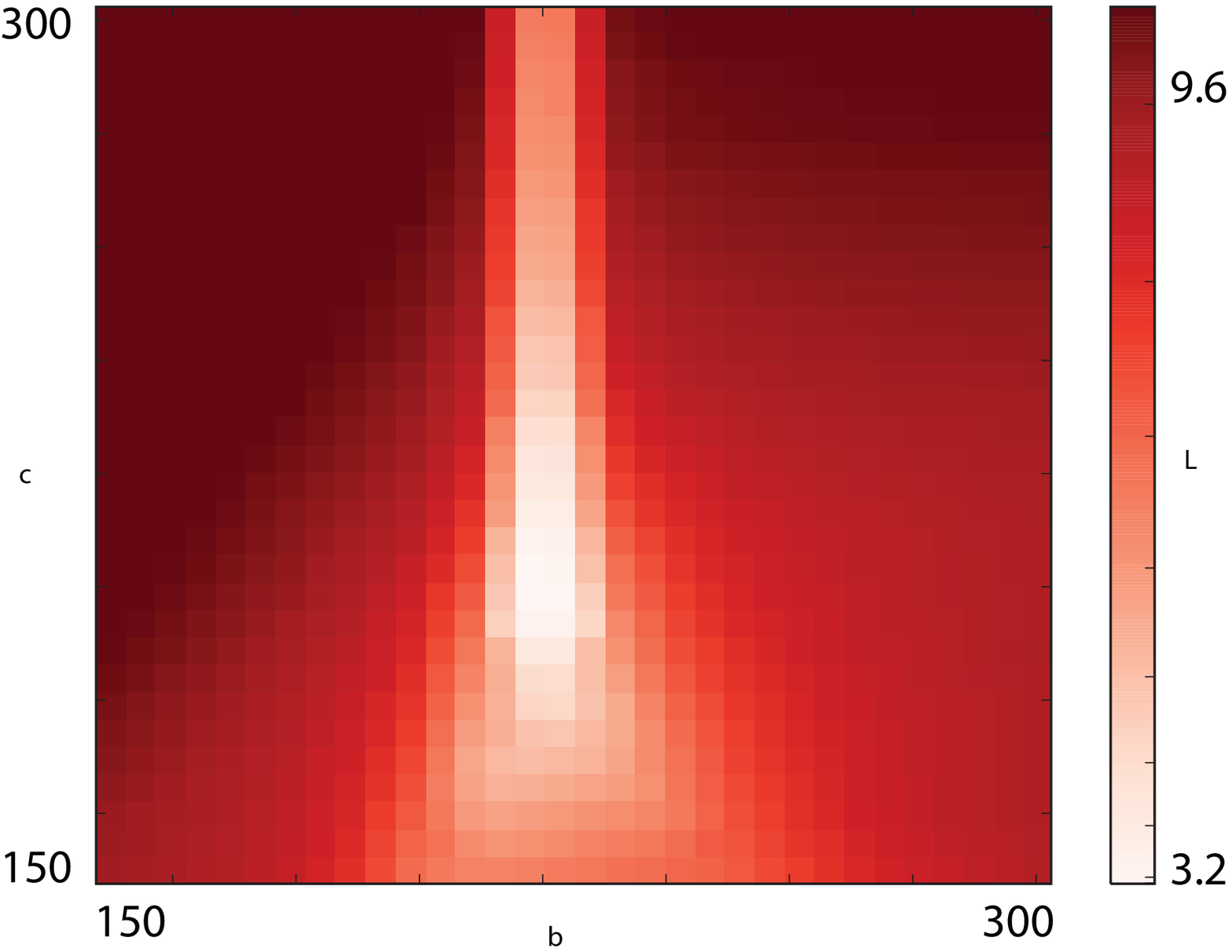}\label{fig:results_example_b}}
	\caption{Leakage for varying populations of type 1 and 2 in the range $[150,\, 300]$, while keeping the population of type 3 fixed, and with fixed reaction rates $\boldsymbol{\kappa} = 1$. In~\subref{fig:results_example_a} \mynumrobi{1} and \mynumrobi{2} vary, with \mynumrobi{3}=200, and in~\subref{fig:results_example_b} \mynumrobi{2} and \mynumrobi{3} vary, with \mynumrobi{1}=220.
\label{fig:results_example_pop}}
\end{figure}

\vspace{0.5cm}
\begin{figure}
	\centering
	\psfrag{a}[cc][][0.7]{$\kappa_2$}
	\psfrag{e}[cc][][0.7]{$\kappa_4$}
	\psfrag{o}[cc][][0.7]{$\kappa_1$}
	\psfrag{u}[cc][][0.7]{$\kappa_3$}
    \psfrag{L}[cc][][0.8][90]{$\mathcal{L}$}
	\subfigure[]{\includegraphics[height=\myhh]{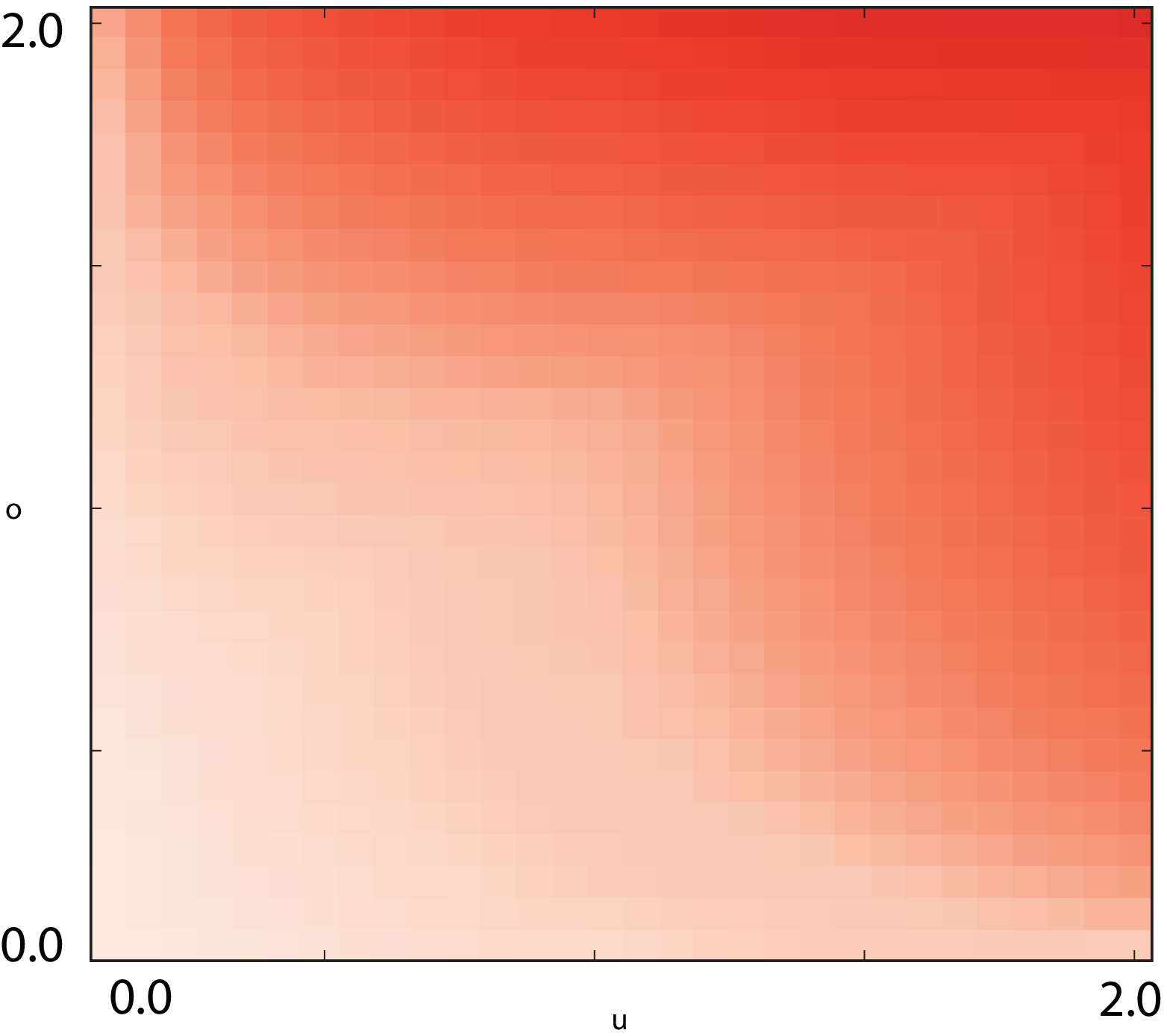}\label{fig:results_example_c}}\hspace{1cm}
	\subfigure[]{\includegraphics[height=\myhh]{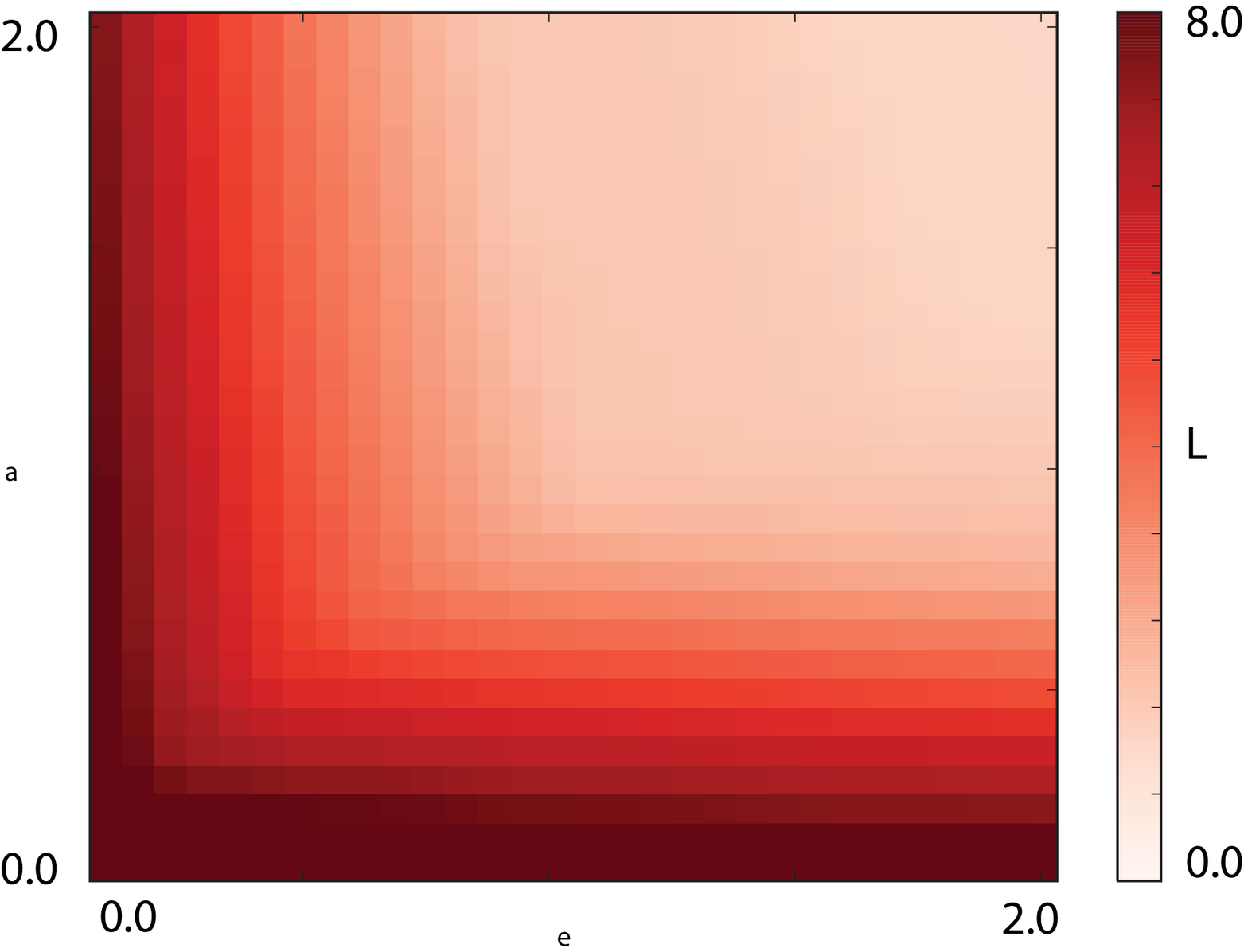}\label{fig:results_example_d}}
	\caption{Leakage for varying collaboration rates with robot populations fixed at $\mynumrobi{1} = \mynumrobi{2} = 220$, $\mynumrobi{3}=200$. In~\subref{fig:results_example_c} we vary the rates $\kappa_1,\, \kappa_3$ at which interactions are initiated, while fixing $\kappa_2 = \kappa_4 = 1$. In~\subref{fig:results_example_d} we show the reverse (varying the rates at which interactions terminate).
\label{fig:results_example_rates}}
\end{figure}

We compute the leakage for two settings, shown in Fig.~\ref{fig:results_example_pop} and Fig.~\ref{fig:results_example_rates}.
In the first setting, we fix the reaction rates and vary the robot populations of two types, while keeping the population of the third type fixed. In Fig.~\ref{fig:results_example_a}, we observe a clear ``valley" of minimum leakage values for an equal number of type 1 and 2. The overall minimum is at $\mynumrobi{1} = \mynumrobi{2} = 220,\, \mynumrobi{3}=200$. The plot also reveals that increasing the total number of robots increases privacy, as shown by the expansion of the valley in the upper right corner.
Panel Fig.~\ref{fig:results_example_b} shows the resulting leakage for varying types 2 and 3.
We observe a sharp drop in privacy as the population \mynumrobi{2} deviates from \mynumrobi{1}, with a minimum at $\mynumrobi{1} = \mynumrobi{2} = 220,\, \mynumrobi{3}=200$ (as previously observed in Fig.~\ref{fig:results_example_a}). 

We conclude that robot types 1 and 2 are interchangeable and should have a balanced number of robots for increased privacy. This symmetry is apparent, \emph{(i)} by looking at Fig.~\ref{fig:graph_CB_case_study}, where exchanging elementary states  $a^{\{1\}}$ and $a^{\{2\}}$ yields the same topology, and \emph{(ii)} by observing that equations \eqref{eq:cs1_system} and \eqref{eq:cs1_cons} remain identical when $x^{\{1\}}$ and $N^{\{1\}}$ are exchanged with $x^{\{2\}}$ and $N^{\{2\}}$, respectively.

In the second setting (Fig.~\ref{fig:results_example_rates}), we vary the reaction rates while keeping the robot populations fixed.
Figures~\ref{fig:results_example_b} and~\ref{fig:results_example_c} indicate that if we increase the probability of initiating collaborations that involve more robot types, either by increasing the number \mynumrobi{3}, or by increasing the collaboration rates, we decrease privacy. Indeed, in this setting, the collaborations of size 2 and size 3 are unique, hence, they expose more information about the system.

\subsection{Evaluating the Impact of Topology and Parameters}\label{sec:eval_tree}
To expose the impact of a CRN's topology on the leakage, we proceed by considering collaboration-DAGs that can be represented by binary trees (i.e., each reaction node has 2 in-neighbors and 1 out-neighbor). For a system composed of 16 robot types, we evaluate the leakage for each of the possible 10905 unlabeled binary rooted trees with 16 leaves (which corresponds to the Wedderburn-Etherington number). 
Fig.~\ref{fig:results_depth} shows the leakage as a function of the depth of the tree. We see a clear correlation (with a Pearson correlation coefficient 0.74 between irregular, unbalanced topologies (with greater depth) and high leakage values, and between more balanced, symmetric topologies (with smaller depth) and low leakage values.


%
\begin{SCfigure}
	\psfrag{L}[cc][][0.9][90]{$\mathcal{L}$}
	\psfrag{D}[cc][][0.9]{Depth of Tree}
	{\includegraphics[width=0.45\columnwidth]{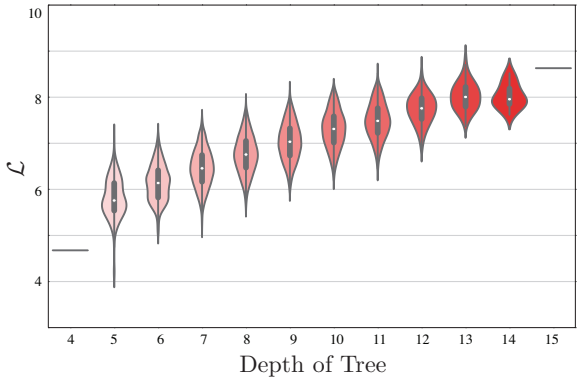}}
	\caption{Leakage for binary collaboration trees of varying topology (with 16 leaves). Trees of same depth are assembled by one violin plot that features a kernel density estimation of the underlying distribution. The Pearson correlation coefficient evaluated on this data is 0.74.
\label{fig:results_depth}}
\end{SCfigure}

\begin{SCfigure}
	\psfrag{L}[cc][][0.9][90]{$\mathcal{L}$}
	\psfrag{A}[cc][][0.9]{Average Aggregate Size}
	{\includegraphics[width=0.45\columnwidth]{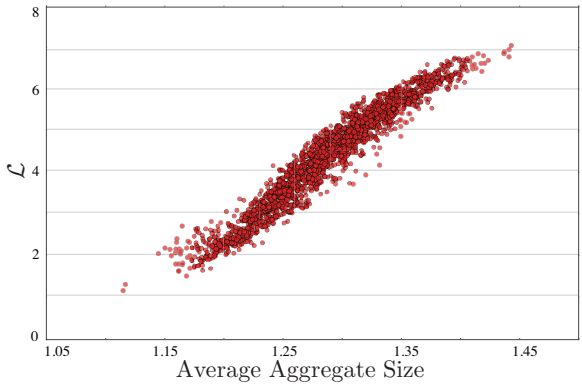}}
	\caption{Leakage for 2000 trees with identical topology (symmetric binary tree, with 8 leaves), and with all 7 collaboration rates varied uniformly and randomly in the range $[0.1, 2]$ (decomposition rates are held constant, equal to 1). The Pearson correlation coefficient evaluated on this data is 0.97. The average group size is defined by
$\sum_{\mathbf{y} \in \mathcal{Y}}  \bar{\bm{\pi}}_{\mathbf{y}} \cdot \frac{\sum_{i=1}^{N_O} y_i i}{\sum_{i=1}^{N_O} y_i}$
where $\mathcal{Y} = \{q(\mathbf{x}) | \mathbf{x} \in \mathcal{X}\}$.
\label{fig:results_agg_size}}
\end{SCfigure}
Next, to expose the impact of a CRN's parameters (i.e., collaboration rates) on the leakage, we proceed by considering a symmetric binary tree with 8 leaves (of depth 3), and we vary the collaboration rates uniformly and randomly in the range $[0.1, 2]$, gathering 2000 datapoints. The collaboration rates define the time spent in states, and ultimately, they influence the average collaboration group sizes. 

Fig~\ref{fig:results_agg_size} shows the leakage as a function of the average group size (at steady-state) for each set of rates. We see a clear correlation between the average collaboration group size and the leakage (with a Pearson correlation coefficient 0.97).

These results together indicate that privacy can be increased by \emph{(i)} designing collaboration mechanisms that are balanced (asymmetric collaboration mechanisms create more unique collaboration groups, and hence, reveal more information about the system), or by \emph{(ii)} throttling the collaboration rates (which tend to produce larger, be more unique groups).
%


\section{Case Study: General Mechanisms}
\def \labs {0.75}
\begin{SCfigure}
\psfrag{A}[cc][][0.7]{$a_{(e)}^{\{1\}}$}
\psfrag{B}[cc][][0.7]{$a_{(e)}^{\{2\}}$}
\psfrag{C}[cc][][0.7]{$a_{(e)}^{\{3\}}$}
\psfrag{D}[cc][][0.7]{$a_{(w)}^{\{1\}}$}
\psfrag{E}[cc][][0.7]{$a_{(w)}^{\{2\}}$}
\psfrag{F}[cc][][0.7]{$a_{(w)}^{\{3\}}$}
\psfrag{G}[cc][][0.7]{$a_{(w)}^{\{1,2\}}$}
\psfrag{H}[cc][][0.7]{$a_{(w)}^{\{1,3\}}$}
\psfrag{I}[cc][][0.7]{$a_{(w)}^{\{2,3\}}$}
\psfrag{a}[cc][][0.7]{$r_1$}\psfrag{b}[cc][][0.7]{$r_3$}\psfrag{c}[cc][][0.7]{$r_5$}
\psfrag{1}[cc][][0.7]{$r_{11}$}\psfrag{2}[cc][][0.7]{$r_7$}
\psfrag{3}[cc][][0.7]{$r_{15}$}\psfrag{4}[cc][][0.7]{$r_9$}
\psfrag{5}[cc][][0.7]{$r_{13}$}\psfrag{6}[cc][][0.7]{$r_{17}$}
\psfrag{7}[cc][][0.7]{$r_{21}$}\psfrag{8}[cc][][0.7]{$r_{20}$}\psfrag{9}[cc][][0.7]{$r_{19}$}
{\includegraphics[width=0.45\columnwidth]{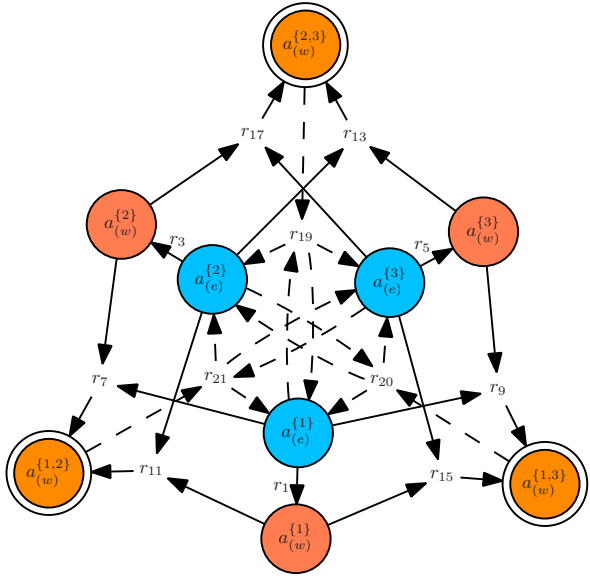}}
\caption{Topology of CRN corresponding to Eq.~\eqref{eq:reactions_cs2}. Solid black arrow are bi-directional reactions, dashed arrows arrows are uni-directional. Reverse reaction labels ($r_{2}, r_{4}, r_{6},\allowbreak r_{8}, r_{10}, r_{12}, r_{14}, r_{16}, r_{18}$) are omitted for clarity. Doubly outlined circles represent compound states. 
\label{fig:graph_nonCB}}
\end{SCfigure}
This case study considers a more general form of collaboration, which is not complex-balanced. We formulate an example of collaborative task solving: robot types have distinct (complementary) capabilities, and hence, depend on each other in order to complete tasks. Our system is composed of three types, $\mathcal{S}= \{1,2,3\}$. For any given task to be completed successfully, one robot of each type must be present at the respective task. 

There are a number of realistic scenarios that relate to this setting. A well-known work considers a setting where a homogeneous system of robots is tasked to pull sticks out of the ground~\citep{Ijspeert:2001ws} --- because the length of a single robot's arm is limited, a successful manipulation requires two robots to collaborate.
Our current case-study can be formulated analogously by expanding the original statement to a heterogeneous setting. By default, all robots are in exploration mode, searching for tasks that need to be completed. A robot encounters tasks at a certain rate. Once it has encountered a task that is either unattended, or that is occupied by one of the other two types, it will wait at the task. The robot may abandon the task (with a given rate) before it is completed, or wait until all other robots from the other types join the task. If the robot abandons the task, it returns to exploration mode. If a robot encounters a task where both other types are already present, the three robots are able to collaborate and successfully complete the task. All three robots then return to exploration mode. We formalize this behavior with the following CRN:
\begin{eqnarray}
\begin{tabular}{ll}
	\mya{e}{1} \myarrow{\mypropx{1}}{\mypropx{2}} \mya{w}{1} & 	~~~\mya{e}{2} + \mya{w}{3} \myarrow{\mypropx{13}}{\mypropx{14}} \mya{w}{2,3} \\
	\mya{e}{2} \myarrow{\mypropx{3}}{\mypropx{4}} \mya{w}{2} & 	~~~\mya{e}{3} + \mya{w}{1} \myarrow{\mypropx{15}}{\mypropx{16}} \mya{w}{1,3}  \\
	\mya{e}{3} \myarrow{\mypropx{5}}{\mypropx{6}} \mya{w}{3} & ~~~\mya{e}{3} + \mya{w}{2} \myarrow{\mypropx{17}}{\mypropx{18}}  \mya{w}{2,3} \nonumber
\end{tabular}
\end{eqnarray}
\begin{eqnarray}\label{eq:reactions_cs2}
\begin{tabular}{ll}
	\mya{e}{1} + \mya{w}{2} \myarrow{\mypropx{7}}{\mypropx{8}} \mya{w}{1,2} & ~~~\mya{e}{1} + \mya{w}{2,3} $\xrightarrow{\mypropx{19}}$ \mya{e}{1} + \mya{e}{2} + \mya{e}{3} \\
	\mya{e}{1} + \mya{w}{3} \myarrow{\mypropx{9}}{\mypropx{10}} \mya{w}{1,3} & ~~~\mya{e}{2} + \mya{w}{1,3} $\xrightarrow{\mypropx{20}}$ \mya{e}{1} + \mya{e}{2} + \mya{e}{3} \\
	\mya{e}{2} + \mya{w}{1} \myarrow{\mypropx{11}}{\mypropx{12}} \mya{w}{1,2} & ~~~\mya{e}{3} + \mya{w}{1,2} $\xrightarrow{\mypropx{21}}$ \mya{e}{1} + \mya{e}{2} + \mya{e}{3}
\end{tabular}
\end{eqnarray}
The states of this system are
\begin{eqnarray}
	\begin{tabular}{ll}
	$\mystateset^{(0)} = \emptyset$ & ~~~$\mystateset^{(2)} = \{\mya{e}{2}, \mya{w}{2}, \mya{w}{1,2}, \mya{w}{2,3}\}$ \nonumber \\
	$\mystateset^{(1)} = \{\mya{e}{1}, \mya{w}{1}, \mya{w}{1,2}, \mya{w}{1,3}\}$ & ~~~$\mystateset^{(3)} = \{\mya{e}{3}, \mya{w}{3}, \mya{w}{1,3}, \mya{w}{2,3}\}$
	\end{tabular}
\end{eqnarray}
with $\mystateset = \cup_{s=\{0,1,2,3\}} \mystateset^{(s)}$, and where $a^{\mathcal{I}}_{(e)}$ corresponds to the exploration state and $a^{\mathcal{I}}_{(w)}$ to the waiting state (e.g., $a^{\{1,3\}}_{(w)}$ corresponds to the state where one robot of types 1 and one robot of type 3 are waiting at a task for a robot of type 2).
From the reaction equations above, we see that robots interact when two robots are waiting for the remaining robot. The compound states are:
\begin{equation}
	\mystateset^{(1)} \cap \mystate^{(2)} = \{\mya{w}{1,2}\}, ~~ \mystateset^{(1)} \cap \mystate^{(3)} = \{\mya{w}{1,3}\}, ~~ \mystateset^{(2)}\cap \mystate^{(3)} = \{\mya{w}{2,3}\}
\end{equation}
Our population vector keeps track of the number of robots per state, and is:
\begin{eqnarray}
	\mathbf{x} = [\myx{e}{1}, \myx{e}{2}, \myx{e}{3}, \myx{w}{1}, \myx{w}{2}, \myx{w}{3},\myx{w}{1,2}, \myx{w}{1,3}, \myx{w}{2,3}].
\end{eqnarray}
We consider an adversary who is able to observe the number of robots that are in exploration mode, the number of robots that are waiting alone, and the number of robots that are waiting in twos. Hence, we formulate the observable data as $\mathbf{y} = [y_1, y_2, y_3]$ and
\begin{equation}
	y_1 = \myx{e}{1} + \myx{e}{2} + \myx{e}{3},~
	y_2 = \myx{w}{1} + \myx{w}{2} + \myx{w}{3},~
	y_3 = \myx{w}{1,2} + \myx{w}{2,3} + \myx{w}{1,3}
\end{equation}
The topology of this collaboration mechanism is illustrated in Fig.~\ref{fig:graph_nonCB}.
We note that the three robot types are exchangeable within this topology (yet, by definition, the robot types are heterogeneous, and hence, contribute to the collaborations through complementary skills).
The reactions' propensity rates can be attributed to the individual types. For instance, reaction $r_7$ is defined by the rate $\kappa_7$ at which type 1 encounters tasks at which type 2 is waiting. Hence, $\kappa_7$ is attributed to type 1. We define these values as $\boldsymbol{\kappa}^{(s)}$, and summarize them as $\boldsymbol{\kappa}^{(1)} = [\kappa_1, \kappa_2, \kappa_7, \kappa_8, \kappa_9,\allowbreak \kappa_{10}, \kappa_{19}]$, $\boldsymbol{\kappa}^{(2)} = [\kappa_3, \kappa_4, \kappa_{11}, \kappa_{12}, \kappa_{13}, \kappa_{14}, \kappa_{20}]$, and  $\boldsymbol{\kappa}^{(3)} = [\kappa_5,$ $\kappa_6, \kappa_{15},\allowbreak \kappa_{16}, \kappa_{17},\allowbreak \kappa_{18},\allowbreak \kappa_{21}]$.
As an example of the resulting dynamics, Fig.~\ref{fig:distributions} shows the marginal distributions resulting from $\pi_{\mathbf{x}}(\tau)$, for all nine components $x_i$ of $\mathbf{x}$, and $\pi_{\mathbf{y}}(\tau)$, for all three components $y_i$ of $\mathbf{y}$.

\subsection{Evaluation} \label{sec:eval_cs2}
\def \fighh {4.2cm}
\begin{figure}[tb]
\psfrag{1}[lc][][0.5]{\mya{e}{1}}
\psfrag{2}[lc][][0.5]{\mya{w}{1}}
\psfrag{3}[lc][][0.5]{\mya{e}{2}}
\psfrag{4}[lc][][0.5]{\mya{w}{2}}
\psfrag{5}[lc][][0.5]{\mya{e}{3}}
\psfrag{6}[lc][][0.5]{\mya{w}{3}}
\psfrag{a}[lc][][0.5]{\mya{w}{1,2}}
\psfrag{b}[lc][][0.5]{\mya{w}{1,3}}
\psfrag{c}[lc][][0.5]{\mya{w}{2,3}}
\psfrag{7}[lc][][0.5]{$y_1$}
\psfrag{8}[lc][][0.5]{$y_2$}
\psfrag{9}[lc][][0.5]{$y_3$}
\psfrag{p}[cc][][0.65][90]{Probability}
\psfrag{n}[cc][][0.65]{Number of robots}
\centering
\subfigure[]{\includegraphics[height=\fighh]{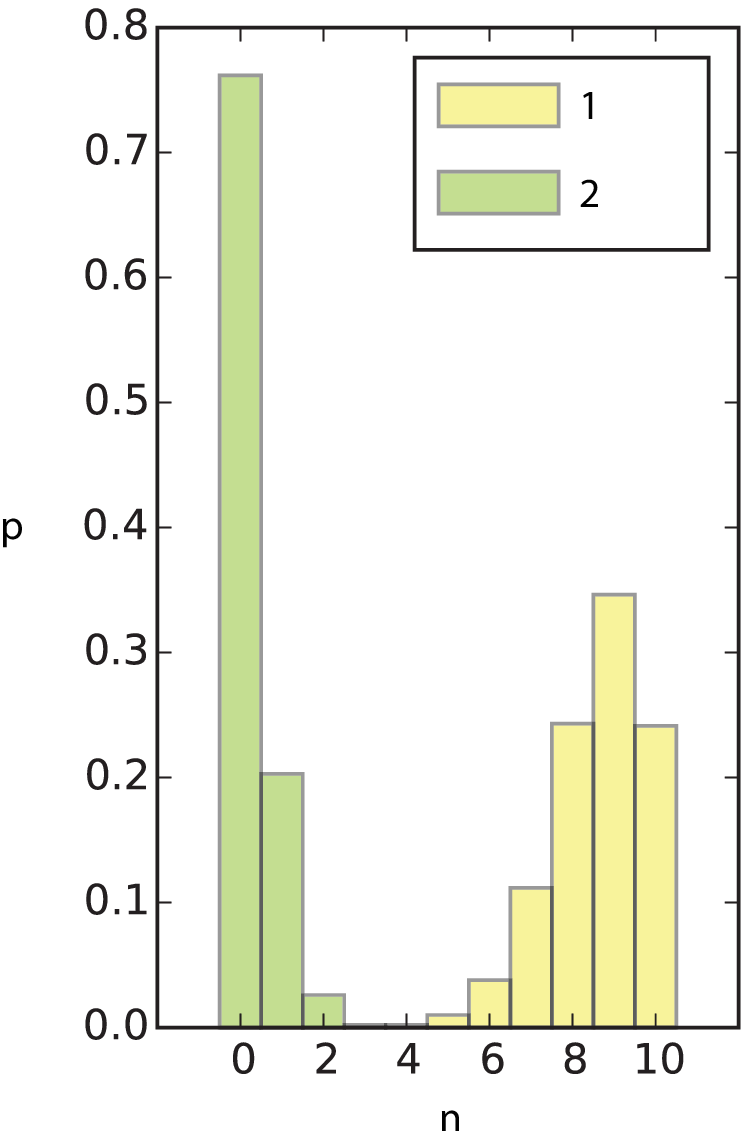}\label{fig:distr_a}}\hspace{0.2cm}
\subfigure[]{\includegraphics[height=\fighh]{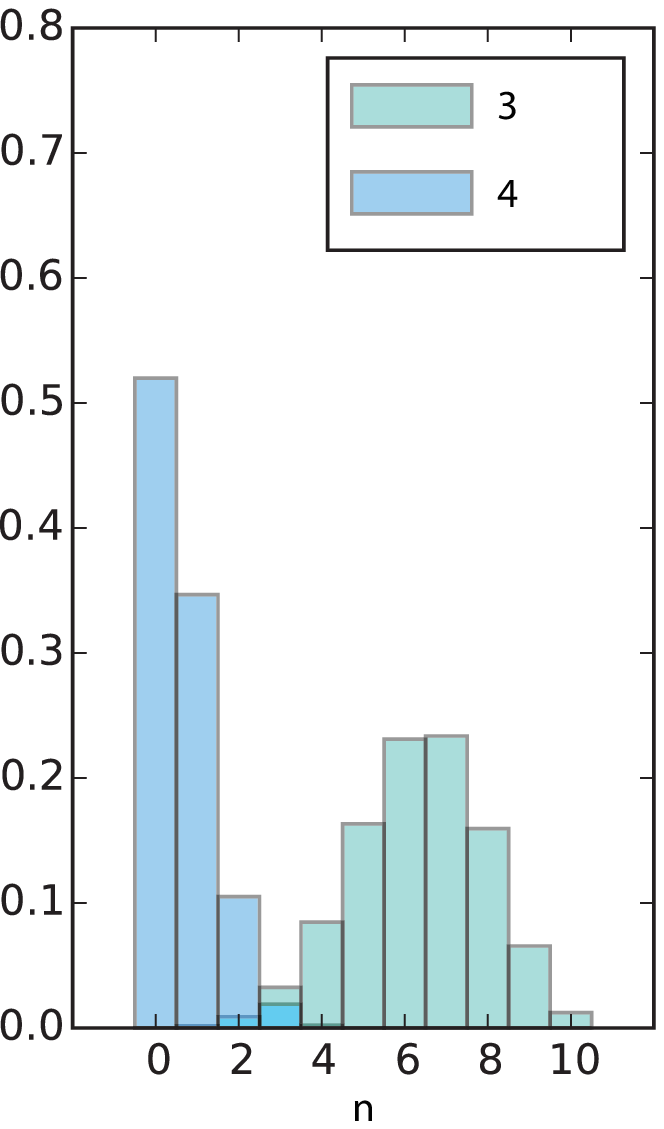}\label{fig:distr_b}}\hspace{0.2cm}
\subfigure[]{\includegraphics[height=\fighh]{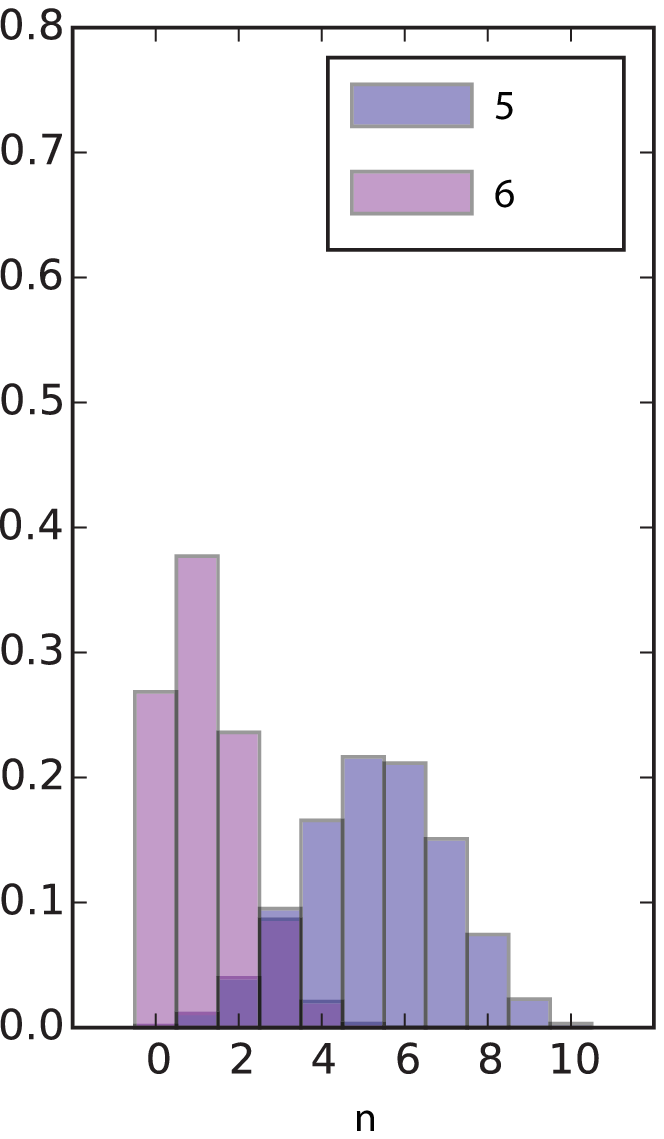}\label{fig:distr_c}}\hspace{0.2cm}
\subfigure[]{\includegraphics[height=\fighh]{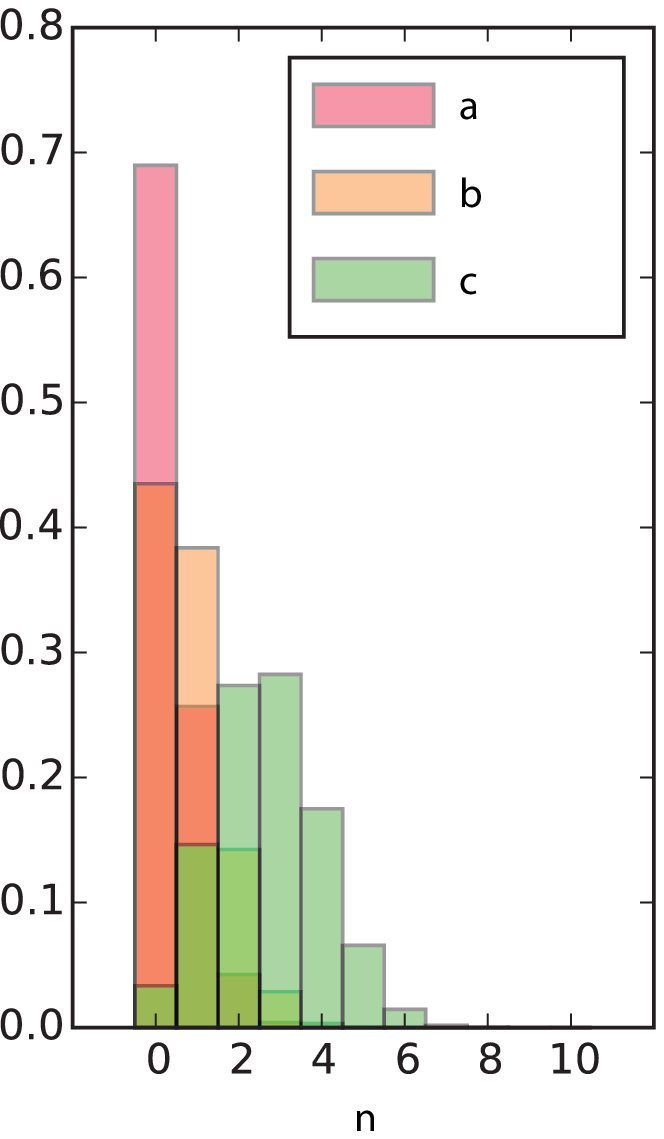}\label{fig:distr_d}}\hspace{0.2cm}
\subfigure[]{\includegraphics[height=\fighh]{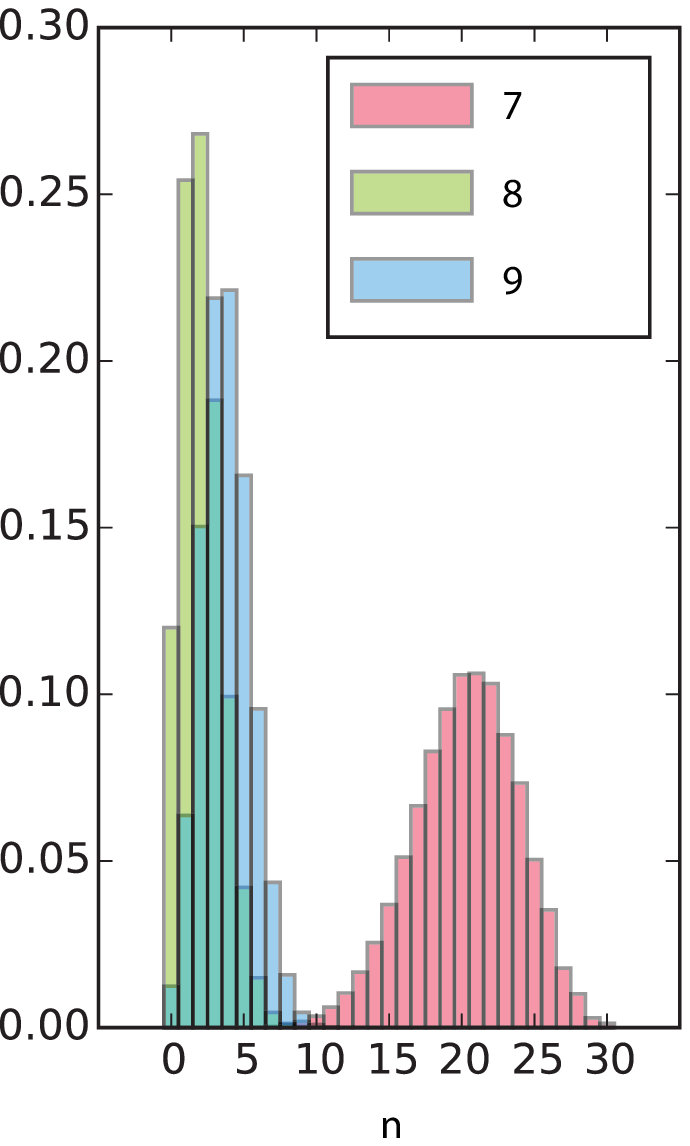}\label{fig:distr_obs}}
\caption{Panels~\subref{fig:distr_a}-\subref{fig:distr_d} show the marginal probability distributions resulting from $\pi_{\mathbf{x}}(\tau)$, for $\tau=10$, for all nine components $x_i$ of $\mathbf{x}$.
Similarly, panel~\subref{fig:distr_obs} shows the marginal distribution of $\pi_{\mathbf{y}}(\tau)$ over the components $y_i$ of the observation data $\mathbf{y}$. The data is obtained for encountering rates $\kappa_1, \kappa_{9}, \kappa_{19} = 0.2$, $\kappa_3, \kappa_{13}, \kappa_{20} = 0.5$ and $\kappa_5, \kappa_{15}, \kappa_{21} = 0.8$, and abandoning rates $\kappa_2, \kappa_{8}, \kappa_{10} = 0.8$, $\kappa_4, \kappa_{12}, \kappa_{14} = 0.5$, and $\kappa_6, \kappa_{16}, \kappa_{18} = 0.2$, for species 1, 2, 3, respectively.
\label{fig:distributions}}
\end{figure}
Since our query is a function of the system-level state, it is defined by the number of robots per type \mynumrobs, and by propensity rates $\boldsymbol{\kappa}$. By varying these values, we can show how system composition and behavior affect privacy.
We evaluate the leakage of the system for the three different settings.
First, we fix the reaction rates $\boldsymbol{\kappa} = 1$, and we vary the population \mynumrobi{2} and \mynumrobi{3} in the range $[1,\ldots,20]$ with $\mynumrobi{3}=10$.
Fig.~\ref{fig:cme_a} shows reduced leakage along the diagonal $\mynumrobi{1} = \mynumrobi{2}$. The minimum leakage occurs at $\mynumrobi{1} = \mynumrobi{2} = \mynumrobi{3} = 10$. As seen previously, in Section~\ref{sec:eval_cs1}, this result indicates that exchangeable robot types should have a similar number of robots in order to maximize privacy. In other words, since types 1, 2, and 3 are exchangeable, larger differences in the number of robots per type will produce more easily identifiable changes in the observable system-level state distributions. The plot also reveals that increasing the total number of robots increases privacy, as shown by the low leakage values in the upper right corner. Evidently, a system composed of many robots is more opaque (to an external observer): probability distributions spread over larger population ranges, and, thus, small differences in the initial population creates smaller differences in observable state distributions.

In the second and third settings, we fix the population $\mynumrobi{1} = \mynumrobi{2} = \mynumrobi{3}= 10$ and vary the reaction rates in the range $[0.2,\,2]$. Fig. \ref{fig:cme_b} shows the leakage when we vary the rates at which species 2 ($\kappa_3, \kappa_{13}, \kappa_{20}$) and species 3 ($\kappa_5, \kappa_{15}, \kappa_{21}$) encounter tasks.
Fig.~\ref{fig:cme_c} shows the leakage when we vary the rates at which species 2 ($\kappa_4, \kappa_{12}, \kappa_{14}$) and species 3 ($\kappa_6, \kappa_{16}, \kappa_{18}$) abandon tasks.
If we program the robot types with the same rates, we obtain indiscernible behaviors (since the types are exchangeable), and hence, increase privacy. This is exemplified in the plots, where off-diagonal values exhibit higher leakage, and the minimum leakage value is situated at the cell corresponding to rate uniformity. Finally, we also note that for the considered parameter ranges, varying the number of robots per type has a much larger impact on privacy than varying the behavior.
\def \figh {4cm}
\begin{figure}[tb]
\centering
\psfrag{L}[cc][][0.8]{$\mathcal{L}$}
\psfrag{u}[cc][][0.8][90]{$\mynumrobi{1}$}
\psfrag{x}[cc][][0.8]{$\mynumrobi{2}$}
\psfrag{a}[cc][][0.8][90]{$\kappa_3,\kappa_{13},\kappa_{20}$}
\psfrag{e}[cc][][0.8]{$\kappa_5,\kappa_{15},\kappa_{21}$}
\psfrag{c}[cc][][0.8][90]{$\kappa_4, \kappa_{12}, \kappa_{14}$}
\psfrag{o}[cc][][0.8]{$\kappa_6, \kappa_{16}, \kappa_{18}$}
\subfigure[]{\includegraphics[height=\figh]{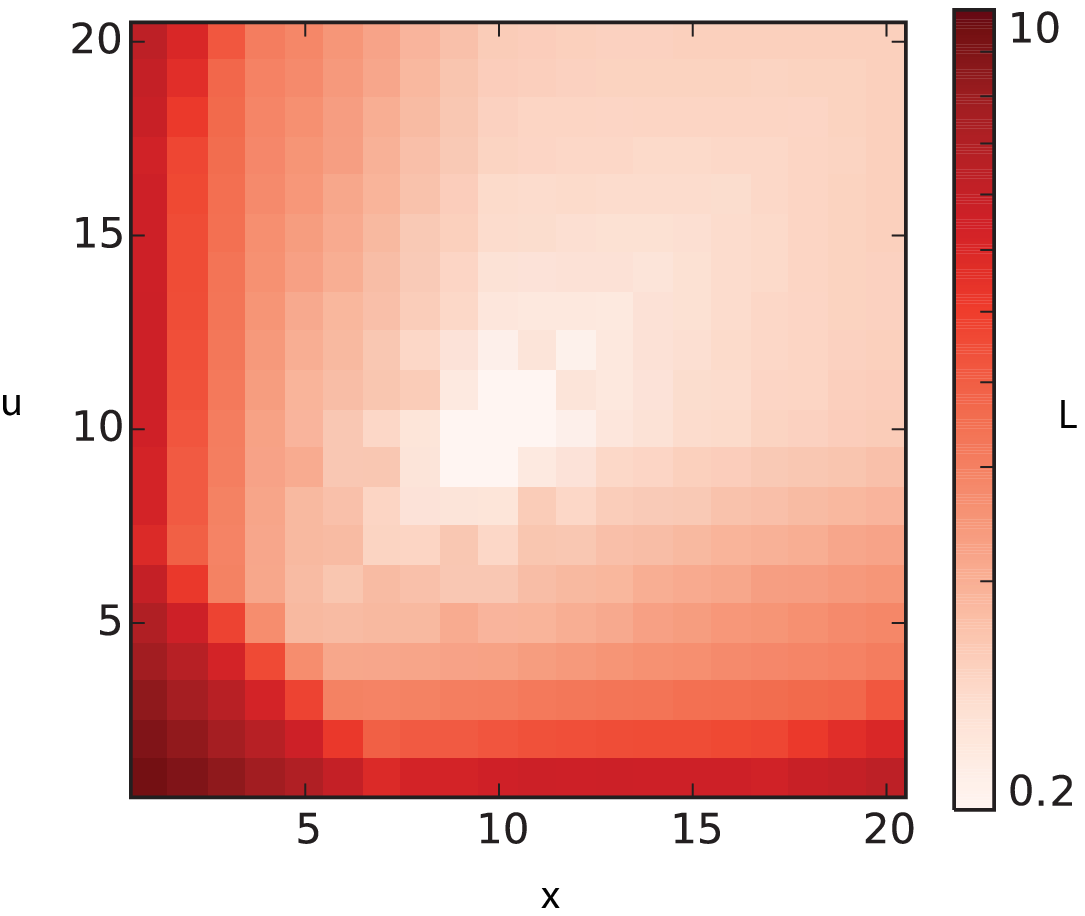}\label{fig:cme_a}}\hspace{0.2cm}
\subfigure[]{\includegraphics[height=\figh]{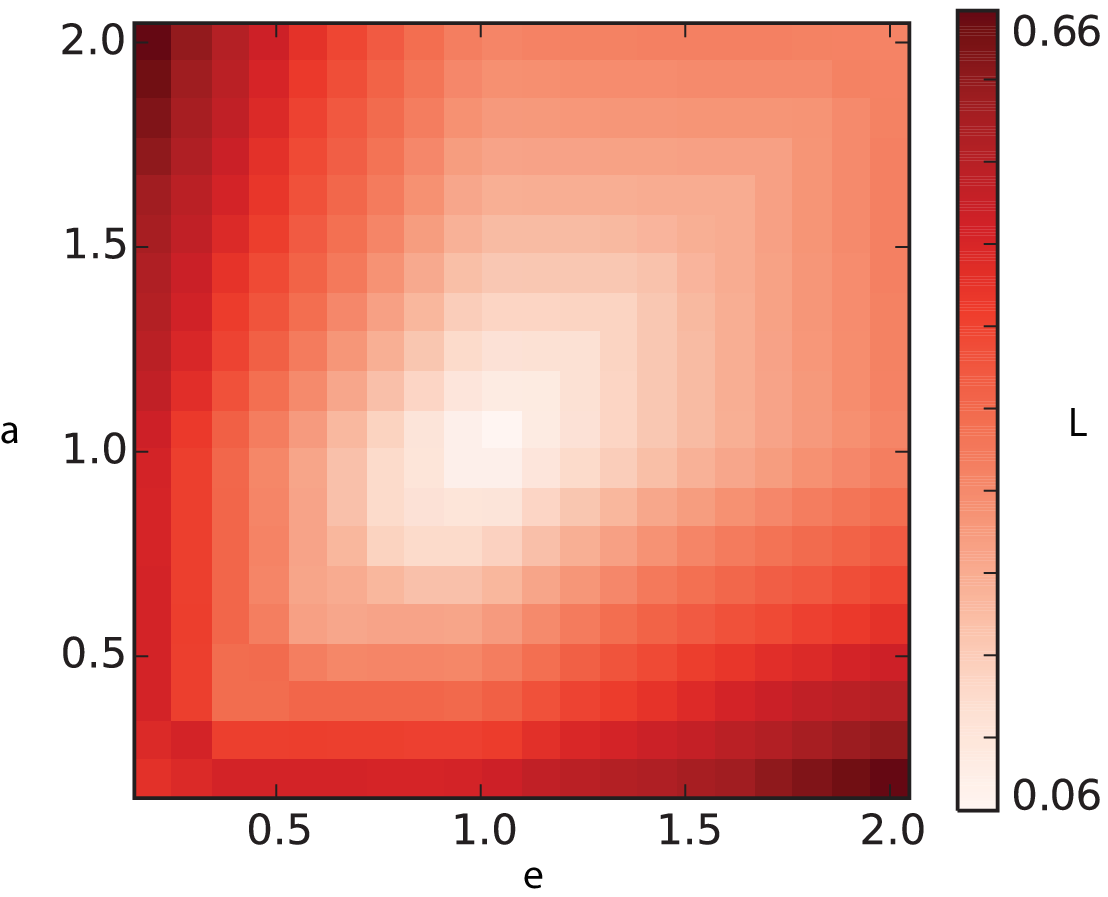}\label{fig:cme_b}}\hspace{0.2cm}
\subfigure[]{\includegraphics[height=\figh]{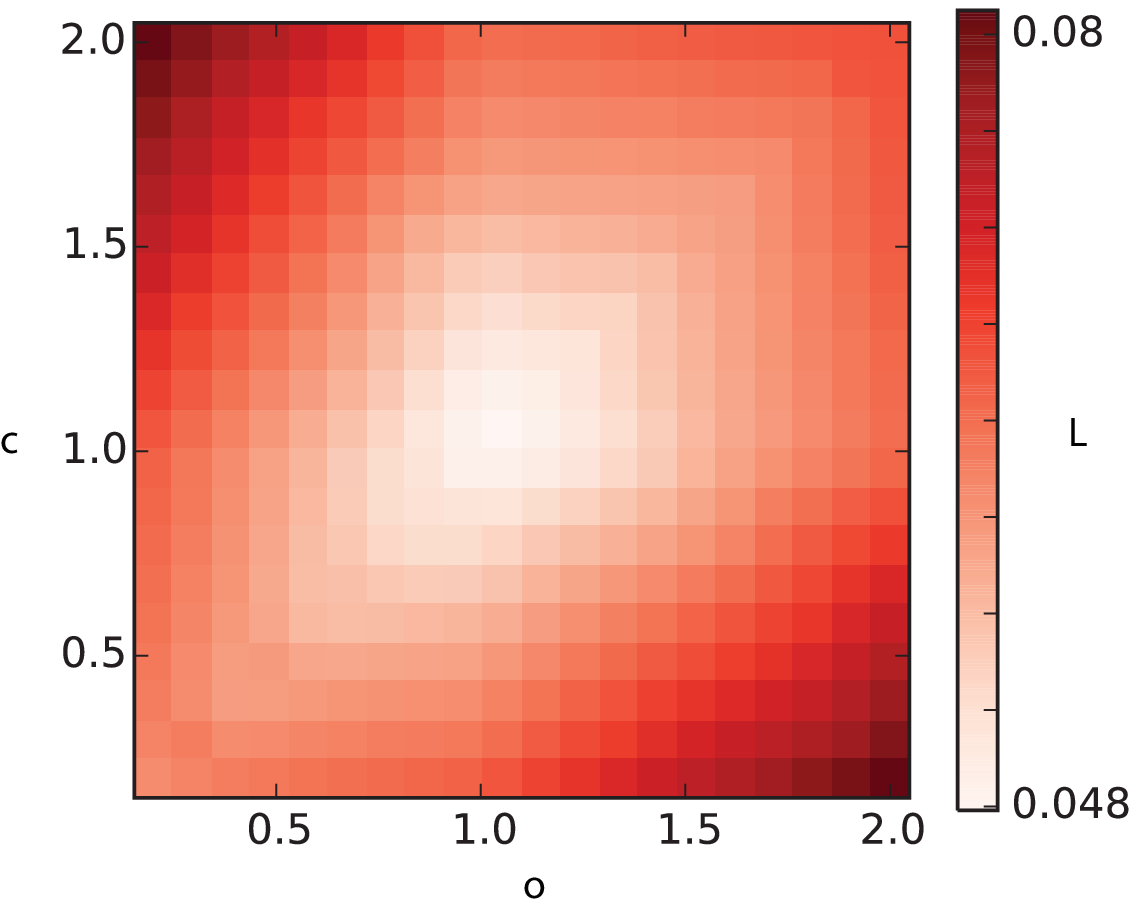}\label{fig:cme_c}}
\caption{Differential privacy of the collaboration case study. The colorbar shows the leakage. In~\subref{fig:cme_a}, we vary the population of robots in types 2 and 3 while keeping type 1 fixed. In~\subref{fig:cme_b}, we vary the task encountering rates of types 2 and 3, and in~\subref{fig:cme_c}, we vary the task abandoning rates of robot types 2 and 3, while keeping the rates of type 1 fixed.
\label{fig:results_cme}}
\end{figure}


\section{Conclusion}
\label{sec:conclusion}
In this work, we showed how to analyze the privacy of a dynamical networked robotic system composed of \emph{heterogeneous}, \emph{interdependent} robot types. Our main contribution consists of a formal definition that couples the notion of differential privacy with a model of the collaborative robot network. 
In specific, we build on the theory of Chemical Reaction Networks to formulate a macroscopic equation that describes the interactions between robot types at a system-wide level.
We showed how to evaluate the privacy through a closed-form equation, if the collaboration mechanism is complex-balanced, or numerically, if the collaboration mechanism has a general form.
We evaluated our formula on two case-studies. Our results show that we are able to determine how privacy levels vary as we vary the design parameters of the underlying system. 

Although we cast our contributions into the context of robotic teams, the methodology is applicable to a wide range of domains that study dynamical systems with interactive and interdependent agents and resources, e.g., sharing economies (collaborative consumption), and city infrastructure systems.
Privacy is an urgent and important topic --- systems that are capable of maintaining high levels of privacy are more secure and resilient. In particular, as networked robotic systems become available to all, we need to understand how to ensure their integrity so that human users are not compromised.
We intend to further this line of work by developing active privacy mechanisms that are able control the amount of information leaked, while maintaining overall system performance.

\subsubsection*{Acknowledgments. }
We gratefully acknowledge the support of ONR grants N00014-15-1-2115 and N00014-14-1-0510, ARL grant W911NF-08-2-0004, NSF grant CNS-1521617, and TerraSwarm, one of six centers of STARnet, a Semiconductor Research Corporation program sponsored by MARCO and DARPA.

\vspace{1cm}

\bibliographystyle{abbrvnat}
\bibliography{Bibliography}

\end{document}